\theoremstyle{plain}
\newtheorem{theorem}{Theorem}[section]
\theoremstyle{definition}
\theoremstyle{remark}
\newtheorem{remark}[theorem]{Remark}
\setlist[itemize]{leftmargin=*}
\setlist[enumerate]{leftmargin=*}
\newcommand{\DALLE}{{DALL\(\cdot\)E}}
\newcommand{\CelebAHQ}{{CelebA-HQ}}
\newcommand{\CelebA}{{CelebA}}
\newcommand{\RR}{\mathbb{R}}
\newcommand{\vi}{\bm{i}}
\newcommand{\vj}{\bm{j}}
\newcommand{\sD}{\mathcal{D}}
\newcommand{\set}[1]{\left\{#1\right\}}
\begin{document}

\title{GeoPos: A Minimal Positional Encoding for Enhanced Fine-Grained Details in Image Synthesis Using Convolutional Neural Networks}

\author{Mehran Hosseini\textsuperscript{*}\\
King's College London\\
London, United Kingdom\\
{\tt\small mehran.hosseini@kcl.ac.uk}
\and
Peyman Hosseini\thanks{Equal contribution; ordered alphabetically}\\
Queen Mary University of London\\
London, United Kingdom\\
{\tt\small s.hosseini@qmul.ac.uk}
}

\maketitle


\begin{abstract}
  The enduring inability of image generative models to recreate
  intricate geometric features, such as those present in human hands
  and fingers has been an ongoing problem in image generation for
  nearly a decade. While strides have been made by increasing model
  sizes and diversifying training datasets, this issue remains
  prevalent across all models, from denoising diffusion models to
  Generative Adversarial Networks (GAN), pointing to a fundamental
  shortcoming in the underlying architectures. In this paper, we
  demonstrate how this problem can be mitigated by augmenting
  convolution layers geometric capabilities through providing them
  with a single input channel incorporating the relative
  $n$-dimensional Cartesian coordinate system. We show this
  drastically improves quality of images generated by Diffusion Models,
  GANs, and Variational AutoEncoders (VAE).
\end{abstract}

\section{Introduction}
\label{sec: Introduction}
Generative models have gained immense popularity and generated
unprecedented hype in the last few years, revolutionising the way we
approach tasks that involve generating new content. SoA
image generative models, like \DALLE{} 3 \cite{Ramesh+21,
  Ramesh+22, Betker+23}, Stable Diffusion \cite{Rombach+22},
Midjourney \cite{Midjourney}, and Nvidia's StyleGAN \cite{KarrasLA19,
  Karras+20, Karras+21} are used to create mesmerising
high-resolution images.

However, all of these models have a peculiar shortcoming when it comes
to learning and reproducing certain geometric patterns, like those
present in human hands and fingers. For example, Figure~\ref{fig:
  Numbers GANs}b shows the images generated by \DALLE{} 3, when
prompted ``\emph{a realistic human hand showing number \(n\)}'', for
\(n=2, 4\). This phenomenon is universally present in all families of
generative models, from GANs \cite{Goodfellow+14} to denoising
diffusion models \cite{SongME21, HoJA20}, whether they are based on
\emph{convolution} \cite{FukushimaM82, Lecun+89}, \emph{Vision
  Transformers} (\emph{ViT}) \cite{Dosovitskiy+21, Parmar+19}, or a
combination of both \cite{Haiping+21}.


%
\begin{figure}[t]
  \centering
  \begin{minipage}[b]{0.242\linewidth}
    \centering
    \includegraphics[width=\textwidth]{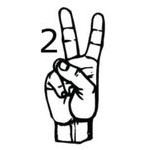}
  \end{minipage}
  \hfill
  \begin{minipage}[b]{0.242\linewidth}
    \centering
    \includegraphics[width=\textwidth]{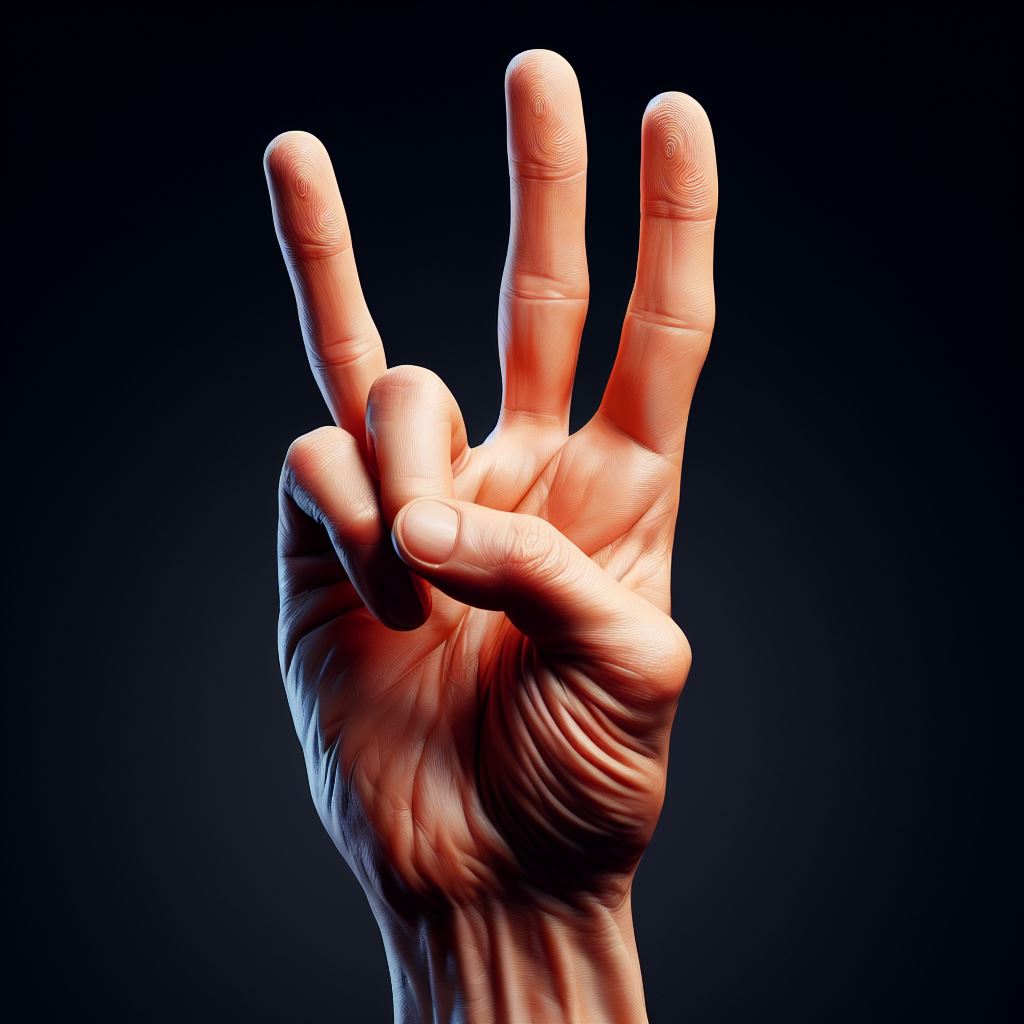}
  \end{minipage}
  \hfill
  \begin{minipage}[b]{0.242\linewidth}
    \centering
    \includegraphics[width=\textwidth]{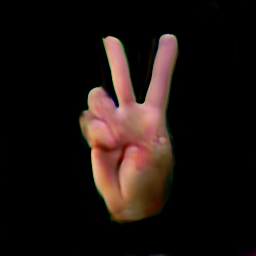}
  \end{minipage}
  \vspace{.15em}
  \hfill
  \begin{minipage}[b]{0.242\linewidth}
    \centering
    \includegraphics[width=\textwidth]{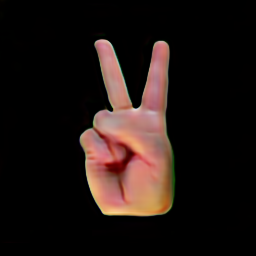}
  \end{minipage}
  \begin{minipage}[b]{0.242\linewidth}
    \centering
    \includegraphics[width=\textwidth]{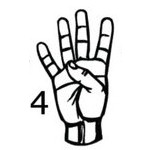}
    \caption*{{\small (a) Hand drawn}}
  \end{minipage}
  \begin{minipage}[b]{0.242\linewidth}
    \centering
    \includegraphics[width=\textwidth]{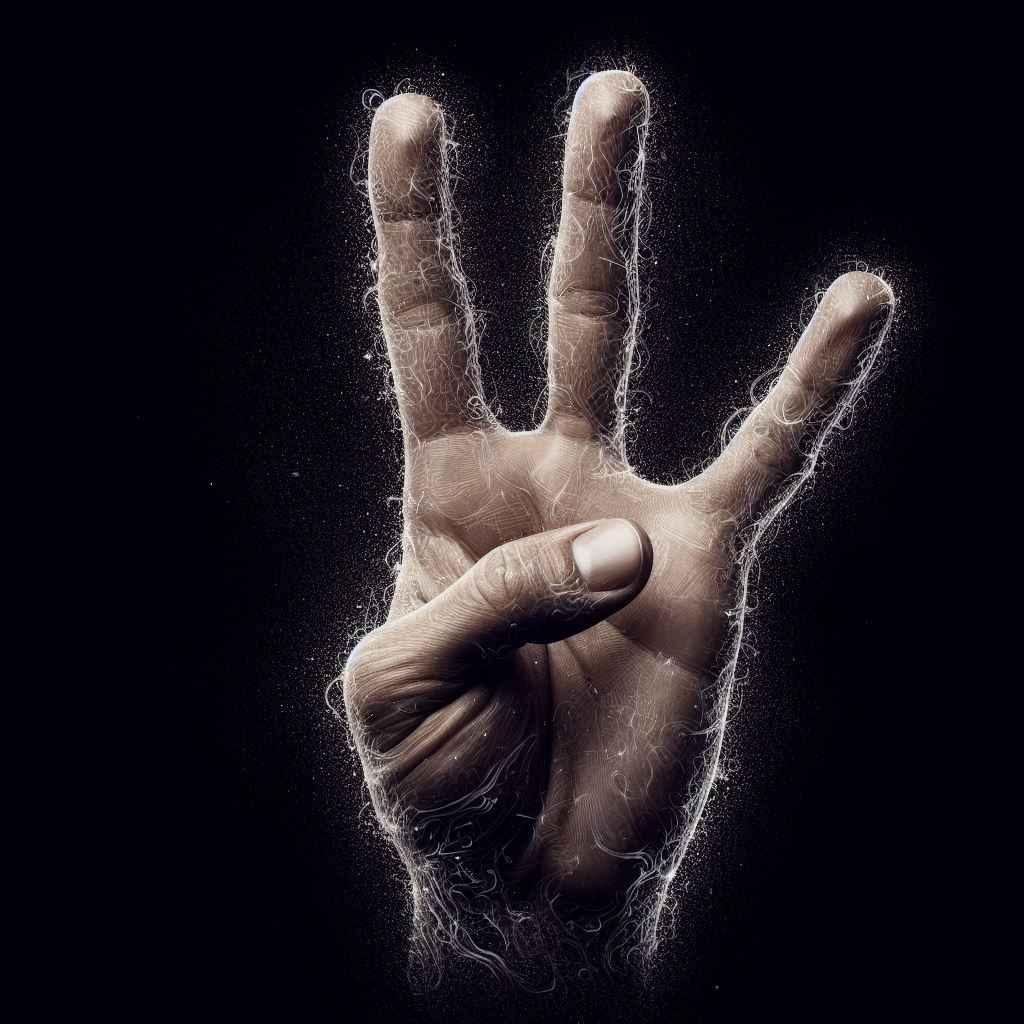}
    \caption*{{\small (b) \DALLE{} 3}}
  \end{minipage}
  \begin{minipage}[b]{0.242\linewidth}
    \centering
    \includegraphics[width=\textwidth]{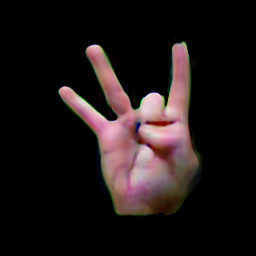}
    \caption*{{\small (c) ConvGAN}}
  \end{minipage}
  \begin{minipage}[b]{0.242\linewidth}
    \centering
    \includegraphics[width=\textwidth]{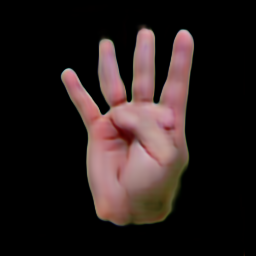}
    \caption*{{\small (d) GeoGAN}}
  \end{minipage}
  \caption{{\small Human hands showing numbers 2 and 4 as drawn by
      hand (Fig.~\ref{fig: Numbers GANs}a) and as generated by
      \DALLE{} 3 (Fig.~\ref{fig: Numbers GANs}b), a standard
      convolutional GAN (Fig.~\ref{fig: Numbers GANs}c), and GeoGAN
      (ours) (Fig.~\ref{fig: Numbers GANs}d). The comparison is
      between ConvGAN and GeoGAN only. Images generated by \DALLE{} 3
      are \textbf{only} included to illustrate the struggles of SoA models in
      generating human hands.}}
  \label{fig: Numbers GANs}
\end{figure}

Human painters, on the other hand, are able to draw flawless pictures
of hands. It is, in part, because, unlike the generative models,
painters know how hands work, providing them with a knowledge of what
hands can and cannot do. Another contributing factor is that human
painters learn how to draw hands by breaking down and simplifying them
into simple geometric shapes, as shown in Figure~\ref{fig: Numbers GANs}a.

Generative models' shortcomings are caused by two contributing
factors, models' design and architecture, as well as the training
dataset and methodology. Taking into consideration that the SoA image
generative models are trained on a vast collection of images on the
Internet and are further enhanced by methods, such as Reinforcement
Learning with Human Feedback (RLHF) \cite{Christiano+17}, the latter
is not the core issue. In the last few years, it has become
evident that the model size corresponds directly to the quality of
generated images, resulting in models that produce hyper-realistic
images with incredible texture and lighting, yet fall short in
generating intricate patterns, indicating the fundamental inability of
these models in learning the geometric representation of human hands
and fingers. To better understand the ability, or lack thereof, of
convolution operation in learning geometric information, we evaluate
its performance on a geometric task, computing the centre of mass of
finitely many points in a 2-dimensional plane.

%
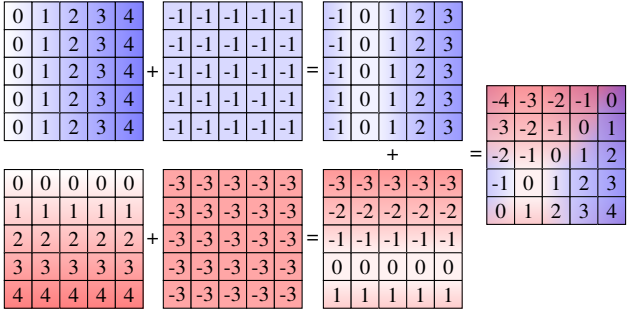
\begin{figure}[t]
  \centering
  \hspace*{-1.5mm}
  \begin{tikzpicture}[
    node distance=.35em and -.55em, font={\footnotesize},
    matrixnode/.style={matrix of nodes,nodes={draw,minimum size=1.05em,anchor=center,inner sep=0pt,outer sep=0pt},
    column sep=-\pgflinewidth, row sep=-\pgflinewidth}
    ]
    \matrix[matrixnode] (x)
    {
      |[right color=blue!10, left color=white]|0 & |[right color=blue!20, left color=blue!10]|1 & |[right color=blue!30, left color=blue!20]|2 & |[right color=blue!40, left color=blue!30]|3 & |[right color=blue!50, left color=blue!40]|4 &\\
      |[right color=blue!10, left color=white]|0 & |[right color=blue!20, left color=blue!10]|1 & |[right color=blue!30, left color=blue!20]|2 & |[right color=blue!40, left color=blue!30]|3 & |[right color=blue!50, left color=blue!40]|4 &\\
      |[right color=blue!10, left color=white]|0 & |[right color=blue!20, left color=blue!10]|1 & |[right color=blue!30, left color=blue!20]|2 & |[right color=blue!40, left color=blue!30]|3 & |[right color=blue!50, left color=blue!40]|4 &\\
      |[right color=blue!10, left color=white]|0 & |[right color=blue!20, left color=blue!10]|1 & |[right color=blue!30, left color=blue!20]|2 & |[right color=blue!40, left color=blue!30]|3 & |[right color=blue!50, left color=blue!40]|4 &\\
      |[right color=blue!10, left color=white]|0 & |[right color=blue!20, left color=blue!10]|1 & |[right color=blue!30, left color=blue!20]|2 & |[right color=blue!40, left color=blue!30]|3 & |[right color=blue!50, left color=blue!40]|4 &\\
    };

    \matrix[matrixnode] (y) [below=of x]
    {
      |[top color=white, bottom color=red!10]|0 & |[top color=white, bottom color=red!10]|0 & |[top color=white, bottom color=red!10]|0 & |[top color=white, bottom color=red!10]|0 & |[top color=white, bottom color=red!10]|0 &\\
      |[top color=red!10, bottom color=red!20]|1 & |[top color=red!10, bottom color=red!20]|1 & |[top color=red!10, bottom color=red!20]|1 & |[top color=red!10, bottom color=red!20]|1 & |[top color=red!10, bottom color=red!20]|1 &\\
      |[top color=red!20, bottom color=red!30]|2 & |[top color=red!20, bottom color=red!30]|2 & |[top color=red!20, bottom color=red!30]|2 & |[top color=red!20, bottom color=red!30]|2 & |[top color=red!20, bottom color=red!30]|2 &\\
      |[top color=red!30, bottom color=red!40]|3 & |[top color=red!30, bottom color=red!40]|3 & |[top color=red!30, bottom color=red!40]|3 & |[top color=red!30, bottom color=red!40]|3 & |[top color=red!30, bottom color=red!40]|3 &\\
      |[top color=red!40, bottom color=red!50]|4 & |[top color=red!40, bottom color=red!50]|4 & |[top color=red!40, bottom color=red!50]|4 & |[top color=red!40, bottom color=red!50]|4 & |[top color=red!40, bottom color=red!50]|4 &\\
    };

    \node (r1p) [right=of x] {+};
    \node (r2p) [right=of y] {+};

    \matrix[matrixnode] (randX) [right=of r1p]
    {
      |[fill=blue!15]|-1 & |[fill=blue!15]|-1 & |[fill=blue!15]|-1 & |[fill=blue!15]|-1 & |[fill=blue!15]|-1\\
      |[fill=blue!15]|-1 & |[fill=blue!15]|-1 & |[fill=blue!15]|-1 & |[fill=blue!15]|-1 & |[fill=blue!15]|-1\\
      |[fill=blue!15]|-1 & |[fill=blue!15]|-1 & |[fill=blue!15]|-1 & |[fill=blue!15]|-1 & |[fill=blue!15]|-1\\
      |[fill=blue!15]|-1 & |[fill=blue!15]|-1 & |[fill=blue!15]|-1 & |[fill=blue!15]|-1 & |[fill=blue!15]|-1\\
      |[fill=blue!15]|-1 & |[fill=blue!15]|-1 & |[fill=blue!15]|-1 & |[fill=blue!15]|-1 & |[fill=blue!15]|-1\\
    };

    \matrix[matrixnode] (randY) [right=of r2p]
    {
      |[fill=red!35]|-3 & |[fill=red!35]|-3 & |[fill=red!35]|-3 & |[fill=red!35]|-3 & |[fill=red!35]|-3\\
      |[fill=red!35]|-3 & |[fill=red!35]|-3 & |[fill=red!35]|-3 & |[fill=red!35]|-3 & |[fill=red!35]|-3\\
      |[fill=red!35]|-3 & |[fill=red!35]|-3 & |[fill=red!35]|-3 & |[fill=red!35]|-3 & |[fill=red!35]|-3\\
      |[fill=red!35]|-3 & |[fill=red!35]|-3 & |[fill=red!35]|-3 & |[fill=red!35]|-3 & |[fill=red!35]|-3\\
      |[fill=red!35]|-3 & |[fill=red!35]|-3 & |[fill=red!35]|-3 & |[fill=red!35]|-3 & |[fill=red!35]|-3\\
    };

    \node (r1e) [right=of randX] {=};
    \node (r2e) [right=of randY] {=};

    \matrix[matrixnode] (coordX) [right=of r1e]
    {
      |[left color=blue!20, right color=blue!05]|-1 &|[left color=blue!05, right color=blue!05]| 0 & |[left color=blue!05, right color=blue!20]|1 & |[left color=blue!20, right color=blue!30]|2 & |[left color=blue!30, right color=blue!40]|3\\
      |[left color=blue!20, right color=blue!05]|-1 &|[left color=blue!05, right color=blue!05]| 0 & |[left color=blue!05, right color=blue!20]|1 & |[left color=blue!20, right color=blue!30]|2 & |[left color=blue!30, right color=blue!40]|3\\
      |[left color=blue!20, right color=blue!05]|-1 &|[left color=blue!05, right color=blue!05]| 0 & |[left color=blue!05, right color=blue!20]|1 & |[left color=blue!20, right color=blue!30]|2 & |[left color=blue!30, right color=blue!40]|3\\
      |[left color=blue!20, right color=blue!05]|-1 &|[left color=blue!05, right color=blue!05]| 0 & |[left color=blue!05, right color=blue!20]|1 & |[left color=blue!20, right color=blue!30]|2 & |[left color=blue!30, right color=blue!40]|3\\
      |[left color=blue!20, right color=blue!05]|-1 &|[left color=blue!05, right color=blue!05]| 0 & |[left color=blue!05, right color=blue!20]|1 & |[left color=blue!20, right color=blue!30]|2 & |[left color=blue!30, right color=blue!40]|3\\
    };

    \matrix[matrixnode] (coordY) [right=of r2e]
    {
      |[top color=red!40, bottom color=red!30]|-3 & |[top color=red!40, bottom color=red!30]|-3 & |[top color=red!40, bottom color=red!30]|-3 & |[top color=red!40, bottom color=red!30]|-3 & |[top color=red!40, bottom color=red!30]|-3\\
      |[top color=red!30, bottom color=red!20]|-2 & |[top color=red!30, bottom color=red!20]|-2 & |[top color=red!30, bottom color=red!20]|-2 & |[top color=red!30, bottom color=red!20]|-2 & |[top color=red!30, bottom color=red!20]|-2\\
      |[top color=red!20, bottom color=red!05]|-1 & |[top color=red!20, bottom color=red!05]|-1 & |[top color=red!20, bottom color=red!05]|-1 & |[top color=red!20, bottom color=red!05]|-1 & |[top color=red!20, bottom color=red!05]|-1\\
      |[top color=red!05, bottom color=red!05]|0 &  |[top color=red!05, bottom color=red!05]|0 &  |[top color=red!05, bottom color=red!05]|0 &  |[top color=red!05, bottom color=red!05]|0 &  |[top color=red!05, bottom color=red!05]|0\\
      |[top color=red!05, bottom color=red!20]|1 &  |[top color=red!05, bottom color=red!20]|1 &  |[top color=red!05, bottom color=red!20]|1 &  |[top color=red!05, bottom color=red!20]|1 &  |[top color=red!05, bottom color=red!20]|1\\
    };

    \node (midPlus) at ($.5*(coordX.center) + .5*(coordY.center)$) {+};

    \node (r1Invisible) [right=of coordX] {};
    \node (r2Invisible) [right=of coordY] {};
    \node (longE) at ($.5*(r1Invisible.east)+.5*(r2Invisible.east)$) {=};

        \matrix[matrixnode] (coordX) [right=of longE]
    { 
      |[upper left=blue!20!red!50,upper right=blue!5!red!42.10, lower left=blue!20!red!37.5,lower right=blue!5!red!31.58]|-4 &
      |[upper left=blue!5!red!42.10,upper right=blue!5!red!42.10, lower left=blue!5!red!31.58,lower right=blue!5!red!31.58]|-3 &
      |[upper left=blue!5!red!42.10,upper right=blue!20!red!50, lower left=blue!5!red!31.58,lower right=blue!20!red!37.5]|-2 &
      |[upper left=blue!20!red!50,upper right=blue!30!red!57.14, lower left=blue!20!red!37.5,lower right=blue!30!red!42.85]|-1 &
      |[upper left=blue!30!red!57.14,upper right=red!40!blue!66.66, lower left=red!30!blue!42.86,lower right=red!30!blue!57.14]|0\\
      |[upper left=blue!20!red!37.5,upper right=blue!5!red!31.58, lower left=blue!20!red!25,lower right=blue!5!red!21.05]|-3 & 
      |[upper left=blue!5!red!31.58,upper right=blue!5!red!31.58, lower left=blue!5!red!21.05,lower right=blue!5!red!21.05]|-2 & 
      |[upper left=blue!5!red!31.58,upper right=blue!20!red!37.5, lower left=blue!5!red!21.05,lower right=blue!20!red!25]|-1 &  
      |[upper left=blue!20!red!37.5,upper right=blue!30!red!42.85, lower left=blue!20!red!25,lower right=red!20!blue!37.5]|0 & 
      |[upper left=red!30!blue!42.86,upper right=red!30!blue!57.14, lower left=red!20!blue!37.5,lower right=red!20!blue!50]|1\\
      |[upper left=blue!20!red!25,upper right=blue!5!red!21.05, lower left=red!5!blue!21.05,lower right=red!5!blue!5.26]|-2 & 
      |[upper left=blue!5!red!21.05,upper right=blue!5!red!21.05, lower left=blue!5!red!5.26,lower right=blue!5!red!5.26]|-1 &  
      |[upper left=blue!5!red!21.05,upper right=blue!20!red!25, lower left=blue!5!red!5.26,lower right=red!5!blue!21.05]|0 &  
      |[upper left=blue!20!red!25,upper right=red!20!blue!37.5, lower left=red!5!blue!21.05,lower right=red!5!blue!31.58]|1 & 
      |[upper left=red!20!blue!37.5,upper right=red!20!blue!50, lower left=red!5!blue!31.58,lower right=red!5!blue!42.10]|2\\
      |[upper left=red!5!blue!21.05,upper right=red!5!blue!5.26, lower left=red!5!blue!21.05,lower right=red!5!blue!5.26!]|-1 &  
      |[upper left=blue!5!red!5.26,upper right=blue!5!red!5.26, lower left=blue!5!red!5.26,lower right=blue!5!red!5.26]|0 &  
      |[upper left=blue!5!red!5.26,upper right=red!5!blue!21.05, lower left=blue!5!red!5.26,lower right=red!5!blue!21.05]|1 &  
      |[upper left=red!5!blue!21.05,upper right=red!5!blue!31.58, lower left=red!5!blue!21.05,lower right=red!5!blue!31.58]|2 & 
      |[upper left=red!5!blue!31.58,upper right=red!5!blue!42.10, lower left=red!5!blue!31.58,lower right=red!5!blue!42.10]|3\\
      |[upper left=red!5!blue!21.05,upper right=blue!5!red!5.26, lower left=blue!20!red!25,lower right=blue!5!red!21.05]|0 &  
      |[upper left=blue!5!red!5.26,upper right=blue!5!red!5.26, lower left=blue!5!red!21.05,lower right=blue!5!red!21.05]|1 &  
      |[upper left=blue!5!red!5.26,upper right=red!20!blue!25, lower left=blue!5!red!21.05,lower right=red!20!blue!21.05]|2 &  
      |[upper left=red!5!blue!21.05,upper right=red!5!blue!31.58, lower left=red!20!blue!25,lower right=red!20!blue!37.5]|3 & 
      |[upper left=red!5!blue!31.58,upper right=red!5!blue!42.10, lower left=red!20!blue!37.5,lower right=red!20!blue!50]|4\\
    };
    
  \end{tikzpicture}
  \caption{{\small A \(5 \times 5\) geometry channel of rank 2 is illustrated
    in the rightmost tensor. The top and bottom rows correspond to
    horizontal and vertical coordinates, respectively. The standard
    horizontal and vertical coordinates are shown in the leftmost
    column. Tensors in the second column show random horizontal and
    vertical shifts. In the implementation, coordinate channels are
    divided by their sizes (in this case 4), and for optimisation, we
    sample horizontal and vertical shifts at once as a single random
    number representing their sum; thus, reducing the number of
    additions and samplings.}}
  \label{fig: GeoConv Illustration}
\end{figure}

In this paper, by providing convolution layers with a single
\emph{Geometry Positional} (\emph{GeoPos}) channel, encoding the Cartesian
coordinates, as presented in \cref{fig: GeoConv Illustration}, we
significantly improve convolution's capabilities. As
illustrated in \cref{fig: GeoConv VAE}, GeoPos is appended to the
convolution's input. We refer to consecutive concatenation of
GeoPos and application of convolution as \emph{GeoConv}. 
Compared to existing approaches, like CoordConv \cite{Liu+18}, GeoConv
\begin{enumerate}
\item as we prove in \cref{thm: Filter Collapse,thm: Equivalence}, is
  computationally optimal and only concatenates a single channel to
  the convolution's input, compared to the \(n\) channels of CoordConv
  for \(n\)-dimensional convolution,
\item allows for random translations (shifts) of the Cartesian coordinate system to avoid learning incorrect absolute positional correlations (cf. \cref{subsec: Positional Dependencies}), and
\item is more robust due to the smoothing effect of random shifts, making it the ideal candidate in generative applications, such as in GANs and VAEs.
\end{enumerate}

Note that the random shifts in the GeoPos channel are
different from those of the input. In particular, we found out contrary to the claim of \cite{Liu+18}, the mere addition of coordinate channels does not prevent mode collapse in GANs, even when we augment its inputs with random transformations. In fact, we found out that CoordConv, which does not incorporate random shifts, is more prone to mode collapse in GANs than even the vanilla convolution.

\begin{figure}[t]
  \centering
  \includegraphics[width=\columnwidth,trim=15 45 4
  0,clip]{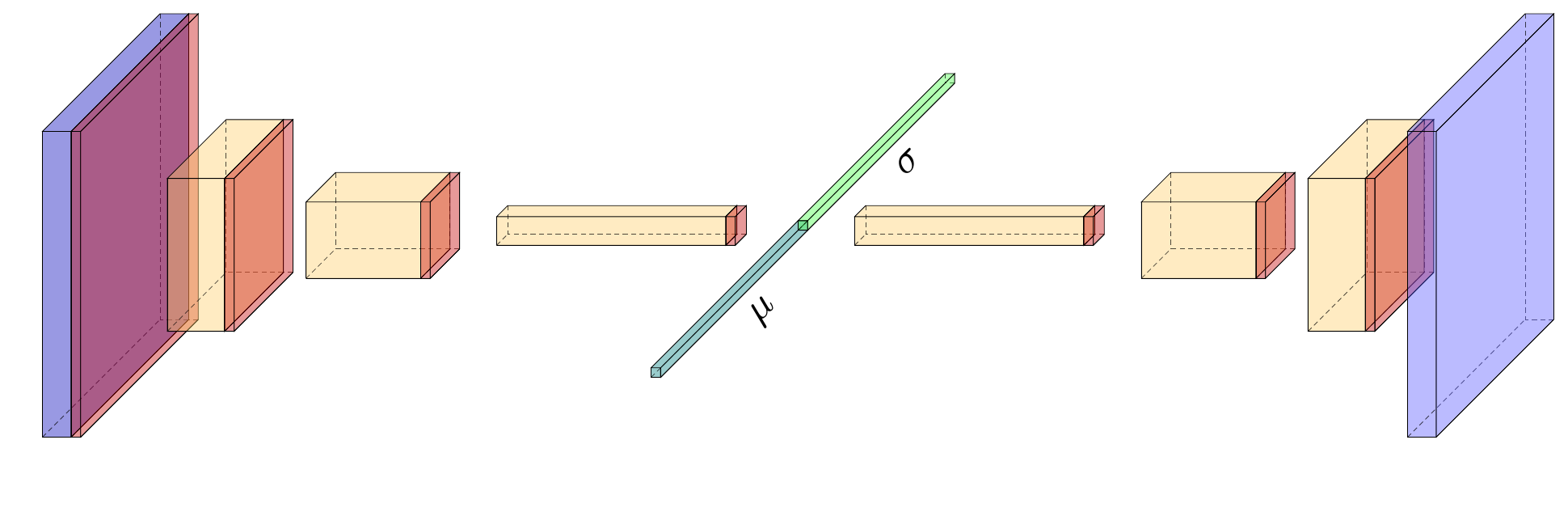}
  \caption{{\small GeoConv in a VAE. Purple blocks indicate the input and
    output tensors, yellow blocks represent the output tensors
    resulting from previous layers' convolution operation, and orange
    blocks indicate the geometry channels appended to them during the
    GeoConv's operation before applying the next convolution.}}
  \label{fig: GeoConv VAE}
\end{figure}

In the rest of \cref{sec: Evaluation}, we show that a
\emph{GeoConv-based GAN} (\emph{GeoGAN}) allows us to generate
realistic hand gestures in the \emph{American Sign Language}
(\emph{ASL}), while a standard \emph{Convolutional GAN} (\emph{ConvGAN}) with the same design, but based on standard
convolution, as well as SoA models, such as \DALLE{} 3 fall short of
achieving the same as shown in \cref{fig: Numbers GANs}. Before
presenting our results on hand gesture synthesis, we evaluate GeoGANs on the
widely used \CelebAHQ. In more details, the experiments in this paper are organised in the following order.
\begin{itemize}
\item We first demonstrate \cref{thm: Positional Dependency,thm: Filter Collapse,thm: Equivalence} in practice on two small geometric experiments in Sections \ref{subsec: Positional Dependencies} and \ref{subsec: Centre of Mass}.
\item In \cref{subsubsec: WGAN-GP Face Generation}, we show that a GeoGAN trained on the \CelebAHQ{} dataset
  \cite{Karras+18} generates more realistic human faces than a
  similar ConvGAN. Moreover,
  while ConvGAN collapses within 250 epochs, GeoGAN remains
  stable throughout the training and produces more diverse images that match the dataset's
  distribution.
\item In \cref{subsubsec: WGAN-GP Face Generation}, we train the
  GANs using the Wasserstein distance \cite{Vaserstein69,
    Kantorovich60, ArjovskyCB17} and gradient penalty \cite{Ishaan+17}
    to prevent mode collapse in the ConvGAN. The resulting models are commonly referred to as
  WGAN-GP. Nevertheless, GeoGAN retains its edge.
\item In \cref{subsubsec: WGAN-GP Hand Generation}, we show that the same GeoGAN trained on the Hand
  Gesture dataset \cite{Massey+11}, generates realistic hand gestures in the American sign language, while the ConvGAN
  struggles to properly generate many of the hand gestures
  (cf. Figures \ref{fig: Numbers GANs}c, \ref{fig: Numbers GANs}d and \ref{fig: WGAN-GP Hands}.)
\item In \cref{subsec: VAE}, we evaluate GeoConv for use in VAEs, since VAEs offer numerical metrics that allow comparing GeoConv to CoordConv and standard convolution quantitatively. We
  repeat the experiments with VAEs, training them on \CelebA{}
 \cite{Liu+15}. The VAE based on GeoConv outperforms other VAEs
  in both image quality and diversity, as well as in achieving smaller losses.
\end{itemize}
\subsection*{Related work}
CNNs have been ubiquitously deployed to achieve superhuman performance
in image classification and object detection \cite{SimonyanZ14a,
  He+16}. More recently, they have been used for image generation
using GANs \cite{Goodfellow+14, RadfordMC15, Karras+18}, VAEs
\cite{KingmaW13, Ramesh+21, Ramesh+22}, and denoising diffusion
models \cite{SongME21, HoJA20}.

In recent years, there has been a surge in the adoption of ViT
\cite{Dosovitskiy+21, Parmar+19}, inspired by the successful adoption
of the attention mechanism \cite{BahdanauCB14} and transformers
\cite{Vaswani+17} in natural language processing. Despite their
tremendous success in vision tasks, recent studies indicate that CNNs
are on par with ViT in both accuracy \cite{Smith+23, Liu+22} and
robustness \cite{Bai+21, PintoTD22}.

CNNs differ from human vision in many ways \cite{Kosiorek+19}. For
example, they are often criticised for their limited receptive field,
preventing them from learning wide-apart features within images
\cite{Luo+16, Kosiorek+19, Dai+17}. Previous research has studied the extent to which CNNs are capable of encoding spatial information and how this spatial information, specifically, absolute positional information can be critical in their performance \cite{IslamJB20, Islam+21}. Some attempts to improve
CNN spatial understanding include augmenting CNNs with transformers
\cite{Hendria+21}, using deformable CNNs \cite{Dai+17}, and augmenting
convolutions with coordinate information \cite{Liu+18}. It is worth
mentioning that similar ideas have been used to improve ViT as well
\cite{XieZF22, HouZF21}; however, these approaches are fundamentally
different from the approach taken here, not because of their focus on
transformers, but mainly because they try to address a different
problem in ViTs.

Liu \textit{et al.} \cite{Liu+18} demonstrated that CNNs also fail in transforming the
spatial representation between input and output. They introduced
\emph{CoordConv} as a solution to this problem of
CNNs. \emph{CoordConv} adds one channel per input dimension to the
convolution's input, called coordinate channel. This has proven to
improve CNNs' performance in an array of tasks
\cite{Liu+18}. CoordConv has since been adopted in an array of
applications \cite{WangBL19, Long+20, ChoiKC20, LeeK19}. Nonetheless,
CoordConv has several drawbacks as we discuss in more details in
Problems \ref{it: Problem 1} and \ref{it: Problem 2} as well as in
\cref{sec: Evaluation}.


%
\section{Geometry-aware convolution}
\label{sec: Theory}
As we discussed in the related work, CoordConv mitigates the limited
receptive field of convolutional layers as well as their inability to learn positional information in images by adding two coordinate
channels, one for each dimension, before applying the convolution
operation. These channels are shown in the two leftmost columns in
\cref{fig: GeoConv Illustration}. CoordConv has shown considerable
improvements compared to convolution in an array of tasks
\cite{WangBL19, Long+20, ChoiKC20, LeeK19}. However, as we show in
this paper, CoordConv has several drawbacks both in theory and
practice. In theory,
\begin{enumerate}
\item\label{it: Problem 1} CoordConv learns absolute positional
  correlations from the dataset, thus, resulting in biased models with
  poor performance in various tasks, while GeoConv learns the
  relative positional correlations when using the random shift
  (cf. \cref{thm: Positional Dependency}), and
\item\label{it: Problem 2} CoordConv is suboptimal (cf. \cref{thm:
    Filter Collapse}), i.e., it introduces \(n \ell s_1 \cdots s_n\)
  learnable parameters for a single \(n\)-dimensional convolution
  operation with kernel size \(s_1 \times \dots \times s_n\) and
  \(\ell\) output channels, instead of GeoConv's
  \(\ell s_1 \cdots s_n\) extra parameters.
\end{enumerate}
As we demonstrate in \cref{sec: Evaluation}, these problems result in subpar performance in practice.

In this section, we introduce the \emph{Geometry-aware Convolution},
or \emph{GeoConv} for short, which not only resolves convolution's
limited receptive field and inability to learn positional information,
but also addresses the aforementioned problems of CoordConv. In
summary, GeoConv works as follows. Given an input tensor of size
\(r_1 \times \dots \times r_n\) with \(k\) channels
\(x \in \RR^{r_1 \times \dots \times r_n \times k}\), we first create
a GeoPos channel \(g \in \RR^{r_1 \times \dots \times r_n}\), encoding the
coordinates as well as a random coordinate shift, similar to the one
in the right most column of \cref{fig: GeoConv Illustration}. Tensor \(g\) is then appended to \(x\) resulting in tensor
\((x, g) \in \RR^{r_1 \times \dots \times r_n \times (k+1)}\), which
is then fed into an \(n\)-dimensional convolution \(f\). To better
understand how GeoConv works, let us begin by describing how it
resolves problems \ref{it: Problem 1} and \ref{it: Problem 2}.

\paragraph{Solution to Problem \ref{it: Problem 1}.} The problem with
adding the raw coordinate channels to the images is that, in addition
to learning the spatial information about the image content, the model
develops correlations between features and where they appear in images
rather than their relative position with respect to one another. This
is a fundamental flaw in most applications. For instance, if due
to the bias in the training dataset a feature mostly appears in a
certain part of the images, the model begins to develop bias for the
position of that feature. Such correlations are undesirable
in most real-world scenarios. For example, when training face
recognition models, the input images or videos are nicely cropped and
the faces are centred in the training set; however, in the real world,
where the model is deployed, this is rarely the case. Thus, it is
more essential for a face recognition model to learn where a person's
facial features are located with respect to each other than where they
are exactly located in the input image or video. In \cref{subsec:
  Positional Dependencies}, we explore this problem of CoordConv and
GeoConv's solution in detail.

Therefore, in GeoConv, we introduce random shifts to coordinate
channels to prevent the model from learning unwanted positional bias,
as formally stated and proven in \cref{thm: Positional
  Dependency}. Random shifts are shown in the second column of
\cref{fig: GeoConv Illustration}. Note that these random shifts are
different from random shifts applied to the input in data
augmentation, e.g., values on the edge of the GeoPos channel are defined
in the same way as the ones in the centre, unlike the input's random
shift, where the values on the edge are defined via some padding. Most
notably, applying random shifts to the input does not prevent mode
collapse in GANs that utilise CoordConv architecture.

\begin{theorem}
  \label{thm: Positional Dependency}
  When using random shift, GeoConv learns the relative positional
  information rather than the absolute positional information, as in
  CoordConv.
\end{theorem}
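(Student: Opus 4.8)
The plan is to make the informal relative-versus-absolute dichotomy precise by pinning down the exact statistical role of the random shift and then showing that it makes the training objective invariant under global translations of the coordinate grid. First I would fix notation: for a grid index $\mathbf i=(i_1,\dots,i_n)$ write the GeoPos channel as $g_{\mathbf i}=\sum_{k=1}^n i_k + s$, where $s$ is the single summed shift, drawn afresh and independently of the image $x$ and the target $y$ at every step (the reduction from $n$ per-axis shifts to the single scalar $s$ is immediate, since only their sum enters $g$); CoordConv is recovered by setting $s\equiv 0$ and replacing $g$ by the $n$ channels $\mathbf i\mapsto i_k$. The first step is the elementary but central observation that for any two locations $\mathbf i,\mathbf j$ one has $g_{\mathbf i}-g_{\mathbf j}=\sum_k (i_k-j_k)$, which does not depend on $s$ and is exactly the relative displacement of $\mathbf i$ with respect to $\mathbf j$, whereas $g_{\mathbf i}$ itself exposes the absolute coordinate only through $\sum_k i_k + s$, i.e. confounded with the independent noise $s$. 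Equivalently, the information in the channel $g$ splits into the $\sigma$-algebra of $s$ (independent of $(x,y)$) joined with the $\sigma$-algebra of all relative displacements, and only the latter can be correlated with the data.

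Next I would propagate this through the layers. A single $n$-dimensional convolution applied to $(x,g)$ contributes, at output location $\mathbf p$, the term $\sum_{\bm\delta} w_{\bm\delta}\, g_{\mathbf p+\bm\delta} = \bigl(\sum_k p_k + s\bigr)\sum_{\bm\delta} w_{\bm\delta} + \sum_{\bm\delta} w_{\bm\delta}\sum_k \delta_k$, so absolute position $\sum_k p_k$ and the shift $s$ enter only through the one scalar $\sum_{\bm\delta} w_{\bm\delta}$, and after composing layers the only positional quantities recoverable in an $s$-stable way are differences $g_{\mathbf i}-g_{\mathbf j}$. I would then state the learning claim at the level of the objective: what is actually minimised is $\EE_{s}\,\EE_{(x,y)}\,\mathcal L\bigl(f(x,g^{(s)}),y\bigr)$, and since $g^{(s)}=g^{(0)}+s\mathbf 1$ the map $s\mapsto g^{(s)}$ is precisely a global translation of the coordinate grid; hence this objective is invariant under that action. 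A symmetrisation argument then finishes it: replacing $f$ by its shift-average $\bar f(x,g):=\EE_s\, f(x,g+s\mathbf 1)$ does not increase the loss (Jensen, using convexity of $\mathcal L$ in the network output, which holds for the squared-error, cross-entropy and Wasserstein-critic losses used in the paper), and $\bar f$ depends on $g$ only through its shift-invariant part, i.e. only through relative positions. For CoordConv ($s\equiv 0$) the objective has no such invariance, so whenever the training distribution is position-biased the minimiser strictly gains from an absolute-position term — which is exactly Problem~\ref{it: Problem 1}.

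The step I expect to be the main obstacle is the passage from ``the population objective is invariant under global coordinate shifts'' to ``the trained network uses only relative positional information'': gradient descent need not attain the global minimiser, and, more delicately, a single GeoConv layer does let $s$ leak into its activations, so the honest statement is that this leakage is uninformative in expectation rather than absent in every forward pass; there is also a minor but real subtlety that $s$ has bounded support, so $\bar f$ is shift-invariant only up to the range over which $s$ is sampled. I would handle this by phrasing the theorem around (i) the exact symmetry of the population objective and the consequent existence of a relative-position-only optimiser, and (ii) the fact that averaging over $s$ acts as data augmentation penalising any absolute-position dependence — tying this to the smoothing/robustness remark (item~3 of the introduction) — rather than asserting a pointwise structural identity for arbitrary trained networks, and then cross-referencing the empirical confirmation in \cref{subsec: Positional Dependencies}. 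A secondary point I would be careful to record is the boundary convention noted in the text, namely that the GeoPos channel is given by the same formula at the edges as in the interior (unlike a padded random shift of the input), since this is exactly what makes $s\mapsto g^{(s)}$ a clean global translation and hence what makes the invariance argument run.
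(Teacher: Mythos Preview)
Your proposal is correct and contains the paper's argument as a strict subset. The paper's entire proof is the single-layer computation you carry out in your second paragraph: it writes \(f*(x,g)=f^{(1,\dots,k)}*x+f^{(k+1)}*g\), expands the second term to obtain \(f^{(k+1)}*c\) plus a random scalar multiple of the shift \(r\), and then concludes informally that this added noise ``prevents GeoConv from developing unwanted correlations between \(f*(x,c)\) and locations''. That is the whole proof; there is no objective-level statement, no symmetrisation, no Jensen step, and no discussion of what ``learns'' means formally.

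What you add on top of this --- the decomposition of the information in \(g\) into the \(\sigma\)-algebra of \(s\) versus that of relative displacements, the population-objective invariance \(\EE_s\EE_{(x,y)}\mathcal L(f(x,g^{(s)}),y)\), and the averaging argument producing a shift-invariant \(\bar f\) --- is a genuinely more rigorous route that actually engages with the word ``learns'' in the theorem statement, whereas the paper treats the algebraic identity as the endpoint. Your explicit caveats (SGD need not reach the symmetric optimiser; \(s\) has bounded support; the leakage is uninformative only in expectation) are honest limitations that the paper simply does not raise. Incidentally, your single-layer computation is also tighter than the paper's: you correctly get the coefficient \(\sum_{\bm\delta} w_{\bm\delta}\) multiplying \(s\), whereas the paper writes \(s_1\cdots s_n\, r\), which is only right if the filter entries sum to the kernel volume. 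So your approach both subsumes the paper's and buys a cleaner statement of what is actually being claimed; the paper's version buys brevity.
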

\begin{proof}
  Let us denote the convolution operator with \(*\). As we prove in
  \cref{thm: Filter Collapse}, we can combine the \(n\) coordinate
  channels of CoordConv to a single channel, similar to GeoConv (but
  with no random shift), without affecting its performance. We denote
  this channel by \(c\) and GeoPos' Channel by \(g\). Now, given
  an input tensor \(x\) of rank \(n\) with \(k\) channels, an
  \(s_1 \times \dots \times s_n\) convolution operator \(f\) on the
  \(k\) input channels, and a single GeoPos channel \(g\) (amounting to a
  total of \(k+1\) channels), we have that
  \begin{equation}
    \label{eq: Convolution Distribution}
    f * (x, g) = f^{(1, \dots, k)} \! * x \ + \ f^{(k+1)} \! * g,
  \end{equation}
  where \(f^{(1, \dots, k)}\) and \(f^{(k+1)}\) denote the first \(k\)
  filters of \(f\) and the last filter of \(f\) corresponding to the
  input and GeoPos channel, respectively. Let \(g' = f^{(k+1)} \! *
  g\). We observe that
  \begin{equation}
    \label{eq: GeoChannel vs CoordChannel}
    \begin{split}
    g'_{j_1, \dots, j_n}
    & = \!\sum_{i_1, \dots, i_n} \! f_i^{(k+1)} g_{j_1+i_1, \dots, j_n+i_n}\\
    & = \!\sum_{i_1, \dots, i_n} \! f_i^{(k+1)} (c_{j_1+i_1, \dots, j_n+i_n} \! + r)\\
    & = \!\sum_{i_1, \dots, i_n} \! f_i^{(k+1)} c_{j_1+i_1, \dots, j_n+i_n} \! + s_1 \cdots s_n r\\
    & = \ f^{(k+1)} \! * c + s_1 \cdots s_n r,
    \end{split}
  \end{equation}
  where \(r\) is a random shift sampled from a uniform distribution in
  GeoConv and \(1 \leq j_{\ell} \leq t_{\ell}\) for
  \(1 \leq \ell \leq n\), with \(t_1 \times \cdots \times t_n\) being
  the input shape. It follows from Equations \eqref{eq: Convolution
    Distribution} and \eqref{eq: GeoChannel vs CoordChannel} that
  \begin{equation}
    \label{eq: GeoConv vs CoordConv}
    f * (x, g) = f * (x, c) + s_1 \dots s_n r.
  \end{equation}
  Hence, \(f * (x, g)\) is equal to \(f * (x, c)\) modulo a random
  number \(s_1 \dots s_n r\). This prevents GeoConv from developing
  unwanted correlations between \(f * (x, c)\) and locations resulting
  in this value, while still allowing it to learn the patterns
  present in \(x\).
\end{proof}

\paragraph{Solution to Problem \ref{it: Problem 2}.} CoordConv adds
one coordinate channel per dimension to the input. Nevertheless, as we
formally state and prove in \cref{thm: Filter Collapse,thm: Equivalence}, this is
unnecessary and inefficient. We prove both results in \cref{sec: Proofs}. Let us first state \cref{thm: Filter Collapse}.

%
\begin{theorem}
  \label{thm: Filter Collapse}
  An \(s_1 \times \dots \times s_n\) convolution filter on the
  \(\ell\)-th coordinate channel \(c^{(\ell)}\) in CoordConv does not
  extract any more information than a
  \(1 \times \dots \times 1 \times s_{\ell} \times 1 \times \dots
  \times 1\) convolution filter.
\end{theorem}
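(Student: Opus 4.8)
The plan is to exploit the special structure of the $\ell$-th coordinate channel: $c^{(\ell)}$ is \emph{constant along every axis other than the $\ell$-th}. Concretely, there is a single-variable (in fact affine, up to the normalisation in \cref{fig: GeoConv Illustration}) function $\phi$ with $c^{(\ell)}_{j_1,\dots,j_n} = \phi(j_\ell)$ for every multi-index. Because the convention adopted earlier fixes the coordinate channel to obey this same affine rule on the padded border as well, the identity $c^{(\ell)}_{j_1,\dots,j_n}=\phi(j_\ell)$ holds with no boundary exception, so I can manipulate the convolution sum freely.

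First I would expand the action of an arbitrary $s_1\times\cdots\times s_n$ filter $f$ on this channel,
\begin{equation}
  (f * c^{(\ell)})_{j_1,\dots,j_n} \;=\; \sum_{i_1,\dots,i_n} f_{i_1,\dots,i_n}\,\phi(j_\ell+i_\ell),
\end{equation}
and note that the summand depends on $(i_1,\dots,i_n)$ only through $i_\ell$. So I can perform the sums over the remaining indices first, which yields a one-dimensional filter
\begin{equation}
  \tilde f_{i_\ell} \;=\; \sum_{i_1,\dots,i_{\ell-1},\,i_{\ell+1},\dots,i_n} f_{i_1,\dots,i_n}
\end{equation}
of shape $1\times\cdots\times 1\times s_\ell\times 1\times\cdots\times 1$, and substituting back gives $(f*c^{(\ell)})_{j_1,\dots,j_n} = \sum_{i_\ell}\tilde f_{i_\ell}\,\phi(j_\ell+i_\ell) = (\tilde f * c^{(\ell)})_{j_1,\dots,j_n}$. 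Thus the feature map that the full filter produces from $c^{(\ell)}$ is literally the feature map produced by its thin collapsed version $\tilde f$.

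Finally I would make the ``no more information'' claim precise: the map $f\mapsto f*c^{(\ell)}$ factors as $f\mapsto\tilde f\mapsto\tilde f*c^{(\ell)}$ through the linear surjection $f\mapsto\tilde f$ onto the $s_\ell$-dimensional space of thin filters, so the set of output channels obtainable from $c^{(\ell)}$ is exactly the same whether one uses all $s_1\cdots s_n$ weights or only the $s_\ell$ of the $1\times\cdots\times s_\ell\times\cdots\times1$ filter; the extra $s_1\cdots s_n-s_\ell$ weights lie in the kernel of $f\mapsto\tilde f$ and are pure redundancy. I do not expect a genuinely hard step: the entire content is recognising the ``rank-one-in-index'' structure of $c^{(\ell)}$ and then doing the multi-index bookkeeping, and the only place that needs a careful sentence is the padding behaviour of the coordinate channel, handled by the earlier convention. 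One could even push the collapse further, since $\phi$ is affine, but the thin-filter statement is the clean and robust one, and it is what is needed to count parameters for \cref{thm: Equivalence}.
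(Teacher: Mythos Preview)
Your proposal is correct and follows essentially the same route as the paper: both observe that \(c^{(\ell)}\) depends only on the \(\ell\)-th index, expand the convolution sum, and collapse the filter by summing over the remaining indices to obtain the thin filter \(\tilde f_{i_\ell}=\sum_{\vi\setminus i_\ell} f_{\vi}\) (the paper's \(\bar f\)), yielding identical feature maps. Your additional remarks on padding and on the factorisation \(f\mapsto\tilde f\mapsto\tilde f*c^{(\ell)}\) are sound elaborations but not required by the paper's argument.
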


We additionally prove that when \(s_1 s_2 \cdots s_n \geq n (s_1 + s_2 + \cdots + s_n)\), then GeoConv and CoordConv operations are mathematically equivalent.
\begin{theorem}
  \label{thm: Equivalence}
  For a CoordConv layer with \(s_1 \times \dots \times s_n\) filters such that \(s_1 s_2 \cdots s_n \geq n (s_1 + s_2 + \cdots + s_n)\), there exists an equivalent GeoConv layer (without random shift) of the same filter size.
\end{theorem}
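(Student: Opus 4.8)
The plan is to compare the two layers output channel by output channel, separating the part of each response that depends only on the fixed coordinate tensors from the part that depends on the true input $x$. Write one output channel of the CoordConv layer as $f^{(0)} * x + \sum_{\ell=1}^{n} f^{(\ell)} * c^{(\ell)} + b$, where $f^{(0)}$ stacks the filters on the $k$ content channels, $f^{(\ell)}$ is the $s_1 \times \dots \times s_n$ filter on the $\ell$-th coordinate channel $c^{(\ell)}$, and $b$ is the bias; write the candidate GeoConv channel as $h^{(0)} * x + h^{(1)} * g + b'$ with $g = \sum_{\ell=1}^{n} c^{(\ell)}$ (no random shift). Since $x$ is free while the coordinate tensors are fixed, the $x$-dependent parts agree if and only if $h^{(0)} = f^{(0)}$, so the whole problem reduces to choosing a single filter $h^{(1)} \in \RR^{s_1 \times \dots \times s_n}$ and a bias $b'$ with $\sum_{\ell=1}^{n} h^{(1)} * c^{(\ell)} = \sum_{\ell=1}^{n} f^{(\ell)} * c^{(\ell)} + (b-b')$.

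Next I would invoke \cref{thm: Filter Collapse}: because $c^{(\ell)}$ is constant along every axis other than $\ell$, both $h^{(1)} * c^{(\ell)}$ and $f^{(\ell)} * c^{(\ell)}$ are determined by the length-$s_\ell$ one-dimensional marginals obtained by summing the filters over all axes $\neq \ell$; call them $\tilde h^{(\ell)}$ and $\tilde f^{(\ell)}$. Since each $c^{(\ell)}$ is an affine ramp, $\tilde h^{(\ell)} *_\ell c^{(\ell)}$ is affine in the $\ell$-th output coordinate, with slope proportional to the total mass $\sum_i \tilde h^{(\ell)}_i$ and intercept a first-moment expression in $\tilde h^{(\ell)}$. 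So it suffices to produce an $h^{(1)}$ whose axis-$\ell$ marginal agrees with $\tilde f^{(\ell)}$ in both total mass and first moment for each $\ell$; the leftover constants are soaked up by $b'$. Concretely, I would build $h^{(1)}$ as a superposition of $n$ one-dimensional \emph{slab} filters, the $\ell$-th supported on a fibre parallel to axis $\ell$ and carrying a copy of $\tilde f^{(\ell)}$, plus a small correction tensor that cancels the spurious spikes these slabs leave in one another's marginals. The hypothesis $s_1 s_2 \cdots s_n \geq n(s_1 + s_2 + \cdots + s_n)$ is precisely what I expect to make this linear system solvable: it forces the $s_1 \cdots s_n$ free entries of $h^{(1)}$ to outnumber the $O(s_\ell)$ conditions imposed per coordinate direction (mass, moment, and leakage cancellation), so the relevant linear map is onto.

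The main obstacle is the coupling forced by using a single channel $g = \sum_\ell c^{(\ell)}$: all $n$ marginals of $h^{(1)}$ necessarily share the same total mass $\sum_i h^{(1)}_i$, so their masses — hence the $n$ output slopes — cannot be prescribed independently the way CoordConv's separate filters can, and each slab's mass leaks into the other marginals. The heart of the argument is therefore to show that, under the stated inequality, one can still rebalance mass and moment across all $n$ directions at once; this is exactly the point at which the factor $n$ on the right-hand side is used, costing roughly an extra $s_\ell$ degrees of freedom per direction beyond the naive $\sum_\ell s_\ell$ budget. I expect the genuinely delicate step to be verifying that the map from the entries of $h^{(1)}$ to the tuple of marginal masses and moments is surjective onto the admissible subspace — that no unforeseen linear dependence among these functionals obstructs the construction — rather than the surrounding bookkeeping, which is routine once \cref{thm: Filter Collapse} is in hand.
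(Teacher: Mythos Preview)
Your approach mirrors the paper's: both separate the $x$-dependent part from the coordinate-dependent part and reduce to a linear problem of matching a single filter's action on $g=\sum_\ell c^{(\ell)}$ to the combined action of $n$ filters on the separate $c^{(\ell)}$. The paper does this by matching axis-wise marginals and then arguing that, since the resulting linear system has $s_1\cdots s_n$ unknowns and (by its count) $n(s_1+\cdots+s_n)$ equations, the stated inequality guarantees a solution. You go further, exploiting the affine structure of the $c^{(\ell)}$ and correctly isolating the mass-coupling between marginals as the crux.

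That crux, however, is not merely delicate; it is an actual obstruction your construction cannot overcome. Convolving any filter with the ramp $c^{(\ell)}$ produces an affine function of the output coordinate $j_\ell$ whose slope equals the filter's total mass. In GeoConv the \emph{same} filter $h^{(1)}$ acts on every $c^{(\ell)}$ through $g$, so the slope in every direction is the single number $\sum_{\vi} h^{(1)}_{\vi}$; CoordConv, with a separate filter per coordinate channel, can realise $n$ independent slopes $\sum_{\vi} f^{(\ell)}_{\vi}$. No choice of $h^{(1)}$, however large the kernel, and no bias $b'$ can reconcile this when the CoordConv coordinate filters have unequal total masses. The inequality $s_1\cdots s_n \ge n(s_1+\cdots+s_n)$ is irrelevant here: the failure is one of consistency of an inhomogeneous system, not of underdetermination. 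The paper sidesteps this by asserting ``more unknowns than equations $\Rightarrow$ solvable'', which is invalid for inhomogeneous linear systems and is exactly where the hidden compatibility constraint (all axis marginals of $h^{(1)}$ share the same total sum) bites. Your instinct that surjectivity onto the admissible subspace is the real question is correct; what you should conclude is that surjectivity fails, so the statement as written cannot be proved along either route.
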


Therefore, in GeoConv, we combine all coordinate channels into one by
adding them together, resulting in the GeoPos channel, illustrated in the
rightmost column of \cref{fig: GeoConv Illustration}. The GeoPos channel
is then concatenated to the input channels as demonstrated in
\cref{fig: GeoConv VAE}. By using a single geometry channel instead of
the \(n\) coordinate channels in CoordConv, alongside the random
shift, we achieve superior performance compared to CoordConv while
using \((n-1)\ell\) less filter per convolution, where \(\ell\) is the
number of output channels of the convolution. Consequently, we use
\((n-1)\ell s_1s_2 \cdots s_n\) less learnable parameters. This
provides us with a model that is easier to train, faster, smaller, and
thus, deployable in a wider range of edge devices.

\begin{remark}
  It is important to note that when for some i, \(s_i = 1\), then as stated in \cref{thm: Equivalence} we cannot reduce CoordConv to GeoConv. Nonetheless, there exists a trivial exception to this rule, when the convolution operates on 1-dimensional input and has filter size \(s_1 = 1\); in this case CoordConv (with no shift) are trivially the same.
\end{remark}


%
\section{Evaluation}
\label{sec: Evaluation}
In this section, we evaluate GeoConv on a comprehensive range of
tasks. In \cref{subsec: Centre of Mass}, we evaluate GeoConv
capability in geometric tasks by introducing the centre of mass
benchmark, where GeoConv outperforms convolution and CoordConv by up
to 50\% and 35\%, respectively (cf. \cref{fig: Centre of Mass}). In
\cref{subsec: Positional Dependencies}, we compare all three
architectures on a task for their absolute positional bias on a
simple task consisting of classifying images containing the Greek
numbers I, II, and III. GeoConv and convolution demonstrate the least
bias, while CoordConv has the most. In \cref{subsec: GAN}, we compare
all these architectures for use in GANs. We consider standard GAN
\cite{Goodfellow+14} as well as WGAN-GP \cite{Ishaan+17} for
generating human faces by training on the \CelebAHQ{} \cite{Karras+18}
and hands by training on the Hand Gesture dataset
\cite{Massey+11}. GeoConv generates the most realistic and diverse
faces and hands, while CoordConv collapses in early epochs, performing
even worse than standard convolution.

Finally, we compare all three layers for use in VAEs in
\cref{subsec: VAE}. GeoConv again outperforms others in terms of image
quality and diversity as well as achieving smaller losses on both
train and validation data. We have included the in-depth details of
all experiments in \cref{app: Experimental Setup}.

All of the experiments have been performed in a \emph{GPU-poor}
setting on a computer with 128 GB of RAM and a single GeForce RTX 4090
GPU.

\begin{table}[b]
  \centering
  \begin{tabular}{lccc}
                     & Conv  & CoordConv & GeoConv\\
    \toprule
    Avg. loss        & 274.1 & 218.0     & \textbf{117.1}\\
    Norm. avg. loss  & 0.416 & 0.337     & \textbf{0.246}\\
    \# of best perf. & 8     & 15        & \textbf{26} 
  \end{tabular}
    \caption{Comparison of average, normalised average, and best
    performances of convolution, CoordConv, and GeoConv on the mass
    centre experiment.}
    \label{tbl: Mass Centre Summary}
\end{table}
%

\subsection{Calculating centre of mass}
\label{subsec: Centre of Mass}
In this benchmark, the goal is to compute the centre of mass of
finitely many points in an image. The benchmark consists of datasets
with different densities \(d\). Each dataset consists of images
containing white points on a black canvas, and \(d\) denotes the
percentage of white points in the images in a dataset.

For the ablation study, we consider four different designs with \(i\)
layers and \(j\) filters for \(i, j \in \set{1, 2}\), denoted by
\(i\)x\(j\). Models are trained on datasets with different densities,
\(d \in \set{0.001 \times 3^k : 0 \leq k \leq 6}\), using the
Euclidean norm. Therefore, for each training density \(d\), a total of
\(3 \times 4 = 12\) models are trained. All models are then evaluated
on the test sets with the same density as well as all other densities. We have reported the summary of results in \cref{tbl: Mass Centre Summary}. 

To make the comparison comprehensive, we also computed the normalised
losses (by dividing by the summation over all losses across different
architectures and test and train ratios). We have explained this in
detail in \cref{app: Centre of Mass}. As outlined in \cref{tbl: Mass
  Centre Summary}, GeoConv shows considerable advantage compared to
convolution and CoordConv, outperforming them by 46\% and 57\%,
respectively. Moreover, \cref{fig: Centre of Mass} shows the
normalised losses averaged over all train and test densities for each
architecture and for different number of layers and filters. Again,
GeoConv outperforms convolution and CoordConv in all combinations.

\subsection{Positional dependencies}
\label{subsec: Positional Dependencies}
This experiment is designed to evaluate the (absolute) positional bias
learnt by different architectures. The models are trained on the train
dataset consisting of images of Greek numbers I, II, and III; However,
the distribution of where the numbers are located in the images is
different among the train and test datasets, as described in
\cref{app: Positional Dependencies}. As shown in \cref{tbl: Greek
  Numbers}, CoordConv has the worst performance among all three
architectures, and GeoConv and convolution are on par with each
other. Details of models and training details are included in
\cref{app: Positional Dependencies}

\begin{figure}[t]
    \centering
    \includegraphics[width=\columnwidth]{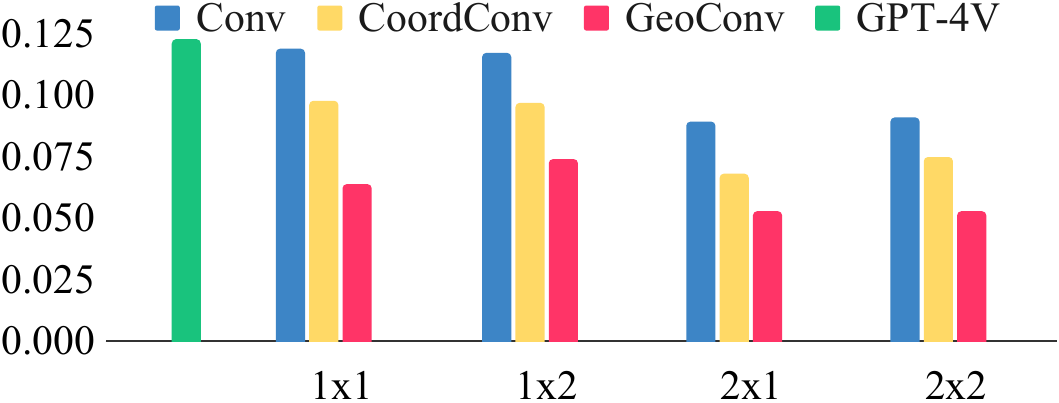}
    \caption{Ablation study on performance of models using each
      architecture with different number of layers and filters. A side
      observation is GPT-4V's \cite{GPT-4,GPT-4V}
      intriguing failure in this task. We evaluated GPT-4V's performance on
      140 (20 per density) dataset images, without
      fine-tuning, but with prompt-engineering, and scaled it by the
      same scaling factor as others.}
    \label{fig: Centre of Mass}
\end{figure}

\begin{table}[b]
  \centering
  \label{tbl: Greek Numbers}
  \begin{tabular}{lccc}
                     & Conv  & CoordConv & GeoConv\\
    \toprule
    Avg. loss        & 1.84 & 2.31     & \textbf{1.59}\\
    Avg. acc. (\%)   & \textbf{34.9} & 34.1     & 34.8\\
    \# of best perf. & 2    & 0         & \textbf{3} 
  \end{tabular}
    \caption{The average loss and accuracy of the models when the
    numbers are moved to all of the possible positions in a
    \(64 \times 64\) canvas}
\end{table}
\begin{figure*}[t]
  \centering
  \begin{minipage}[b]{0.24\textwidth}
    \centering
    \includegraphics[width=\textwidth]{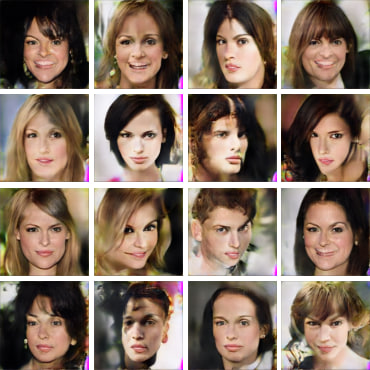}
    \caption*{(a) ConvGAN}
  \end{minipage}
  \hfill
  \begin{minipage}[b]{0.24\textwidth}
    \centering
    \includegraphics[width=\textwidth]{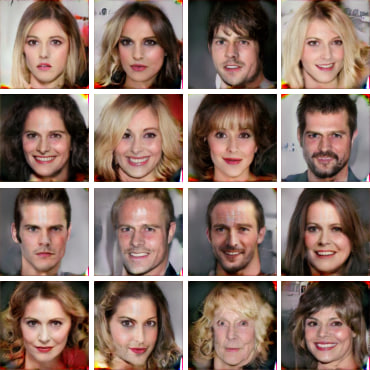}
    \caption*{(b) GeoGAN}
  \end{minipage}
  \hfill
  \begin{minipage}[b]{0.24\textwidth}
    \centering
    \includegraphics[width=\textwidth]{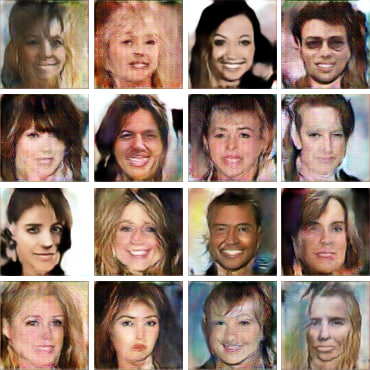}
    \caption*{(c) ConvWGAN-GP}
  \end{minipage}
  \hfill
  \begin{minipage}[b]{0.24\textwidth}
    \centering
    \includegraphics[width=\textwidth]{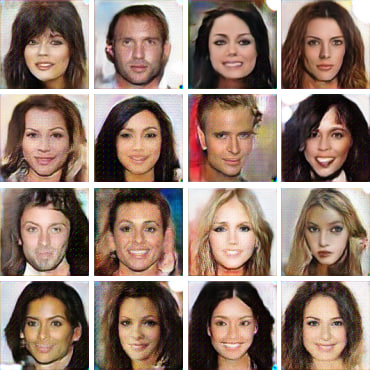}
    \caption*{(d) GeoWGAN-GP}
  \end{minipage}
  \caption{Human faces generated by ConvGAN (\ref{fig:
        GANs}a), GeoGAN (\ref{fig: GANs}b), ConvWGAN-GP (\ref{fig:
        GANs}c), and GeoWGAN-GP (\ref{fig: GANs}d), trained on
      \CelebAHQ{}. Each image is generated as follows. For each of the
      models, we generated 10 images from randomly sampled latent
      points. The image with the highest score from the discriminator
      is added to the canvas. This is repeated 16 times for a
      \(4 \!\times\! 4\) canvas.}
  \label{fig: GANs}
\end{figure*}
%

%
\subsection{Generative adversarial networks}
\label{subsec: GAN}
In this section, we use GeoConv for generating human faces and hand
images using GANs \cite{Goodfellow+14}.  GANs are widely used in an
array of tasks besides the applications considered here, such as
super-resolution \cite{Ledig+17}, photo blending \cite{Wu0ZH19},
etc. and our contribution can open new doors in those applications as
well. For all experiments in this section, we have used the same
design for the models, as described in details in \cref{app: GAN}.
For simplicity, we prepend ``Conv", ``Coord", and ``Geo" prefixes for
the name of the models.  For example, a GAN which uses GeoConv is
referred to as GeoGAN. We have organised this section as follows.

\paragraph{Standard GAN for face generation.} In \cref{subsubsec: GAN
  Face Generation}, we evaluate the performances of the convolution,
GeoConv, and CoordConv in standard GANs \cite{Goodfellow+14,
  RadfordMC15} for generating human faces, by training on the
\CelebAHQ{} dataset \cite{Karras+18} for 450 epochs. CoordGAN
collapses in the first 30 epochs and does not yield meaningful
images. ConvGAN collapses within 250-300 epochs, while GeoConv did not
collapse within 450 epochs. We have provided qualitative and
quantitative summaries of performances in \cref{fig: GANs} and
\cref{tbl: GAN Duels}.

\paragraph{WGAN-GP for face generation.} To prevent mode collapse in
CoordGAN and ConvGAN, we used WGAN-GP \cite{Ishaan+17} with the same
design. This prevented mode collapse in ConvGAN; nevertheless,
CoordGAN collapsed within the first 20 epochs. We also reduced the
number of epochs to 150 since training with gradient, requires
computing second-order derivatives and is computationally expensive;
moreover, we observed that the generated images do not improve after
100 epochs. We have provided a qualitative summary of
performances in \cref{fig: GANs}.

\paragraph{WGAN-GP for hand generation.} We trained conditional
WGAN-GP, with the same design, on the ASL Hand Gesture dataset
\cite{Massey+11} for generating human hand gestures showing numbers
``\emph{0}" to ``\emph{9}" and letters ``\emph{a}" to ``\emph{z}" in
the American sign language for 1,000 epochs. The dataset consists of
2,524 images, with around 70 images per each of the 36 labels. As
expected, CoordWGAN-GP collapses on such a small dataset. Even though
ConvGAN succeeds in generating meaningful hand gestures, it,
sometimes, falls short of reproducing the correct gesture and suffers
in terms of image quality, while GeoConv manages to generate the best
images with correct gesture, as evident in \cref{fig: WGAN-GP Hands}.

\subsubsection{GAN for face generation.}
\label{subsubsec: GAN Face Generation}
Figures \ref{fig: GANs}a and \ref{fig: GANs}b
show the images generated by the ConvGAN and GeoGAN. We have included
the training and models' details, sampling process, as well as more
images generated by each model in \cref{app: GAN}. From Figures
\ref{fig: GANs}a and \ref{fig: GANs}b, we observe
that GeoGAN produces images
\begin{enumerate}
\item\label{it: GAN Face Overall} that are better in terms of the
  overall face layout,
\item have more detail, e.g., teeth, makeup, skin tone, etc., and
\item\label{it: GAN Face Diversity} are more diverse, including 68\%
  female and 31\% male images closely replicating the training set's
  distribution with 63\% female and 37\% male images.
\end{enumerate}
\begin{figure*}[t]
  \centering
  \includegraphics[width=\textwidth]{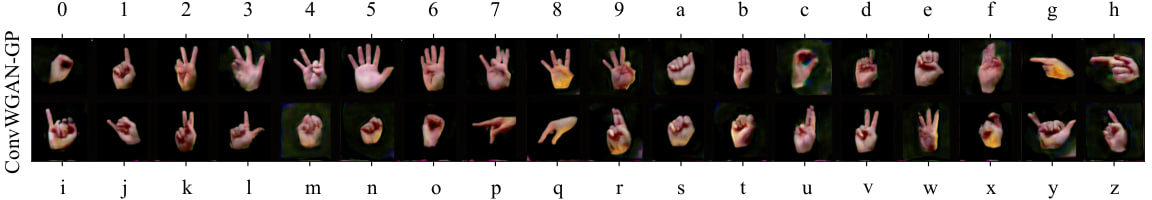}
  \includegraphics[width=\textwidth]{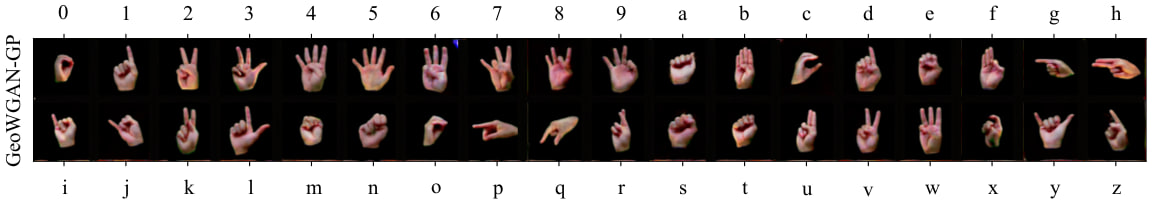}
  \caption{Hand gestures generated by ConvWGAN-GP (top), and
    GeoWGAN-GP (bottom), trained on the ASL Hand dataset. Each image
    is generated as follows. For a given model and label, we generated
    10 images from randomly sampled latent points. The image with
    highest score from the discriminator is added to the canvas. We
    repeat this for each of the 36 labels. Hand gestures generated by
    GeoWGAN-GP, in addition to being clearer, have the correct
    formation and correspond to the correct label, while some of the
    gestures by ConvWGAN-GP, like `\emph{4}', `\emph{6}, `\emph{h}',
    `\emph{r}', and `\emph{s}', show incorrect gestures and some
    other, like `\emph{3}', `\emph{7}', `\emph{c}', `\emph{f}',
    `\emph{i}', and `\emph{o}', are deformed.}
  \label{fig: WGAN-GP Hands}
\end{figure*}
\begin{table}[h!tbp]
  \centering
  \label{tbl: GAN Duels}
  \begin{tabular}{lccc}
              & \multicolumn{3}{c}{Misclassification  Rate (\%)} \\
    \midrule
    Architecture  & Self  & Opp. & Real \\
    \toprule
    ConvGAN        & 75.02 & 7.88 & 0.50 \\
    GeoGAN  & \textbf{42.94}  & \textbf{0.84} & \textbf{0.26}\\ 
  \end{tabular}
    \caption{Duels between ConvGAN and GeoGAN discriminators and
    generators on 10,000 images generated by each of the generators
    and real images from \CelebAHQ{} dataset. Numbers show the
    percentage of images misclassified by each of the discriminators
    against its generator (Self) and opponent's generator
    (Opp). Coordconv is not included due to early mode collapse.}
\end{table}

Additionally, we compared the generator and discriminator of each of
the models against one another and the dataset in \cref{tbl: GAN
  Duels}. GeoGAN's generator deceives ConvGAN's discriminator more by
a factor of 10, and GeoGAN's discriminator is 50\% less likely to
misclassify real images.

%
%

%
\subsubsection{WGAN-GP for face generation.}
\label{subsubsec: WGAN-GP Face Generation}
\emph{Wasserstein GANs} \cite{ArjovskyCB17} \emph{with Gradient
  Penalty} (\emph{WGAN-GP}) \cite{Ishaan+17} emerged as a solution to
the mode collapse problem in standard GANs. In hopes of addressing
mode collapse in ConvGAN and CoordGAN, we trained the models with the
same designs as those in \cref{subsubsec: GAN Face Generation} on the
\CelebAHQ{} dataset. We have included the training detail in
\cref{app: GAN}.  CoordWGAN-GP again failed to produce meaningful
results due to early mode collapse. However, ConvWGAN-GP succeeded in
generating more diverse images, despite falling short in comparison to
GeoWGAN-GP as evident in Figures~\ref{fig: GANs}c and \ref{fig: GANs}d. The
qualities of images generated by both models slightly decreased
compared to the standard GANs. Nonetheless, GeoWGAN-GP still produced
better images compared to ConvWGAN-GP in terms of Items \ref{it: GAN
  Face Overall}-\ref{it: GAN Face Diversity} above.

\subsubsection{WGAN-GP for hand generation.}
\label{subsubsec: WGAN-GP Hand Generation}
\cref{fig: WGAN-GP Hands} shows hand gestures generated by ConvWGAN-GP
and GeoWGAN-GP. Training details and more images are included in
\cref{app: GAN}. ConvWGAN-GP fails to learn the correct
representations for some of the gestures that require intricate
geometric understanding, such as in `\emph{r}', where the middle and
index fingers are crossed. It also generates mutated and contorted
fingers for some other gestures, such as when a finger is hidden
behind another as in `\emph{o}' or `\emph{c}'. This shows standard
convolution's inherent inability to learn complex details. GeoWGAN-GP,
on the other hand, learns more accurate representations for different
hand gestures and generates images of higher quality clear of
mutations and contortions.

\begin{figure*}[t]
  \centering
  \begin{minipage}[b]{0.33\textwidth}
    \centering
    \includegraphics[width=\textwidth]{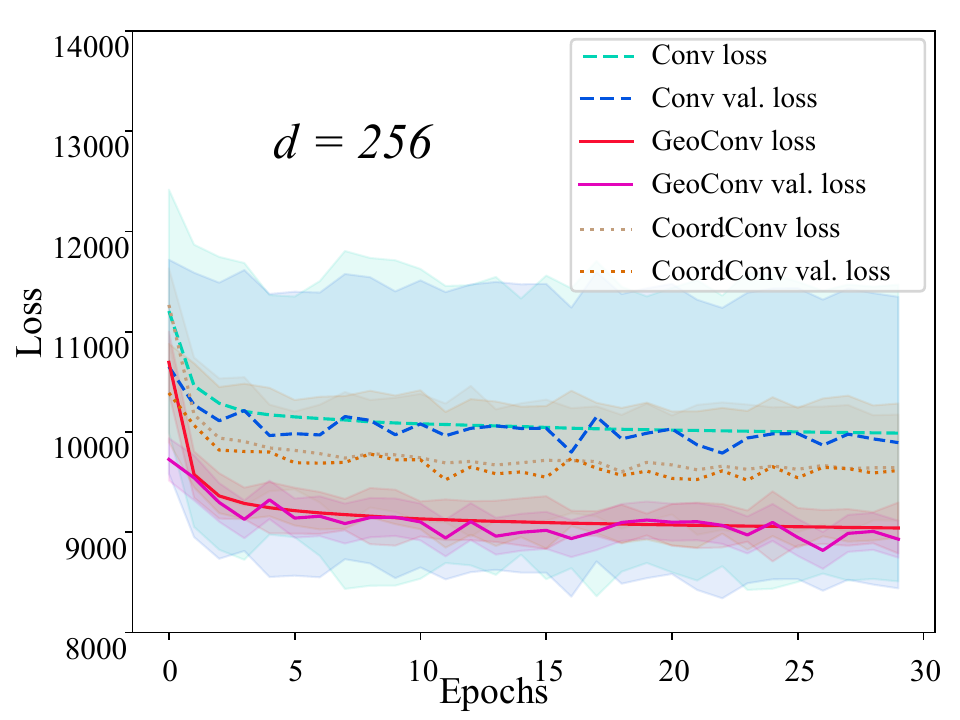}
  \end{minipage}
  \hfill
  \begin{minipage}[b]{0.33\textwidth}
    \centering
    \includegraphics[width=\textwidth]{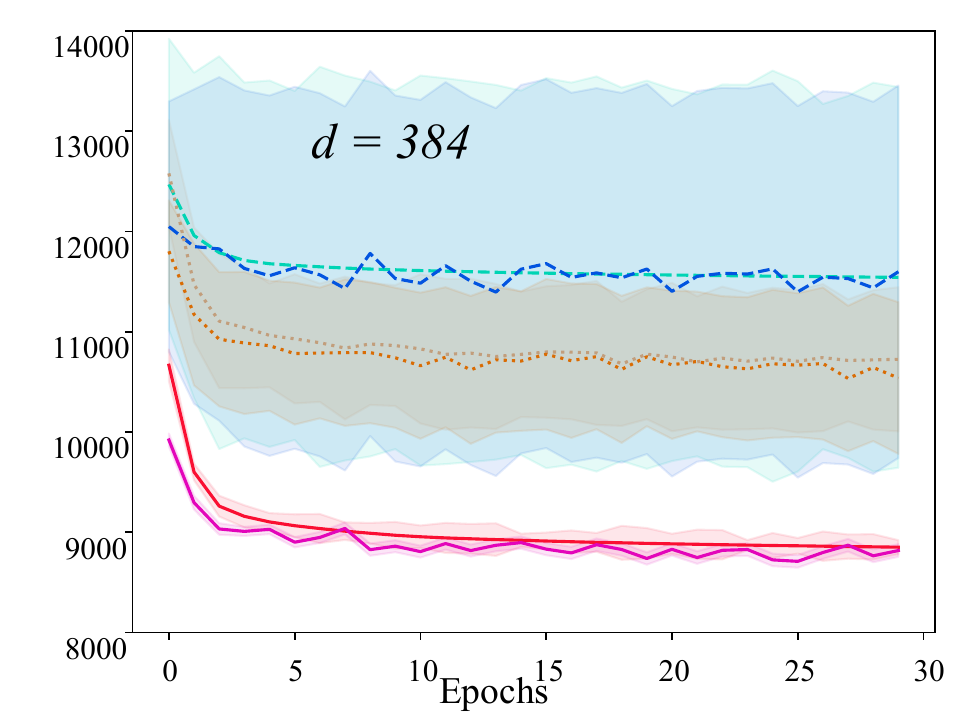}
  \end{minipage}
  \hfill
  \begin{minipage}[b]{0.33\textwidth}
    \centering
    \includegraphics[width=\textwidth]{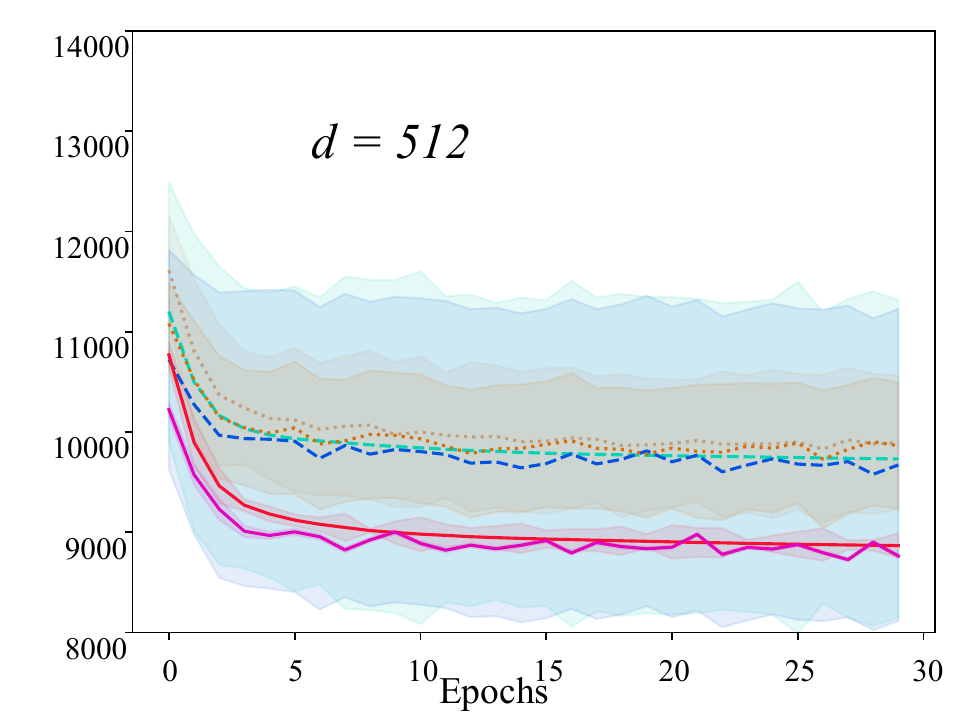}
  \end{minipage}
  \caption{Mean and 95\% CI of train and validation losses of GeoVAE
    (red lines), CoordVAE (dotted brown lines), and ConvVAE (dashed
    blue lines), trained on \CelebA{} dataset for latent dimensions
    \(d \in \{256, 384, 512\}\) over five runs with seeds
    \(0, \dots, 4\). GeoVAE is more consistent across all runs and
    latent dimensions and obtains smaller mean loss and validation
    loss than both ConvVAE and CoordVAE.}
  \label{fig: VAE Evaluation}
\end{figure*}
\subsection{Variational autoencoders}
\label{subsec: VAE}
Due to challenges in quantitative comparison of GANs, we also
evaluate GeoConv for use in VAEs \cite{KingmaW13}, especially since the
effectiveness of convolutions in VAE applications relies on learning
both local and global features from images \cite{Liu+18,
  Kosiorek+19}. VAEs are used in a range of applications
\cite{Kajino19, SinghO22, KovenkoB20}; however, in this section, we
only focus on generating human faces by training on \CelebA{}
dataset \cite{Liu+15}. \cref{app: VAE} includes a similar experiment for
generating hand gestures for ASL numbers and letters, similar to
\cref{subsubsec: WGAN-GP Hand Generation}, as well as model and
training details.

For each latent dimension \(d=256, 384, 512\), we trained GeoVAE,
CoordVAE, and ConvVAE five times to obtain the means and 95\%
\emph{Confidence Intervals} (\emph{CI}) in \cref{fig: VAE
  Evaluation}. Across different latent sizes, GeoVAE obtains 10-25\%
smaller loss and validation loss. Another notable observation is that,
unlike ConvVAE and CoordVAE, GeoVAE's loss does not fluctuate, and the
95\% CI is quite small, especially compared to ConvVAE. We predict
that this may be due to the smoothing effect of the random shift in
GeoConv.

\begin{figure}[h!tbp]
  \centering
  \includegraphics[width=\columnwidth]{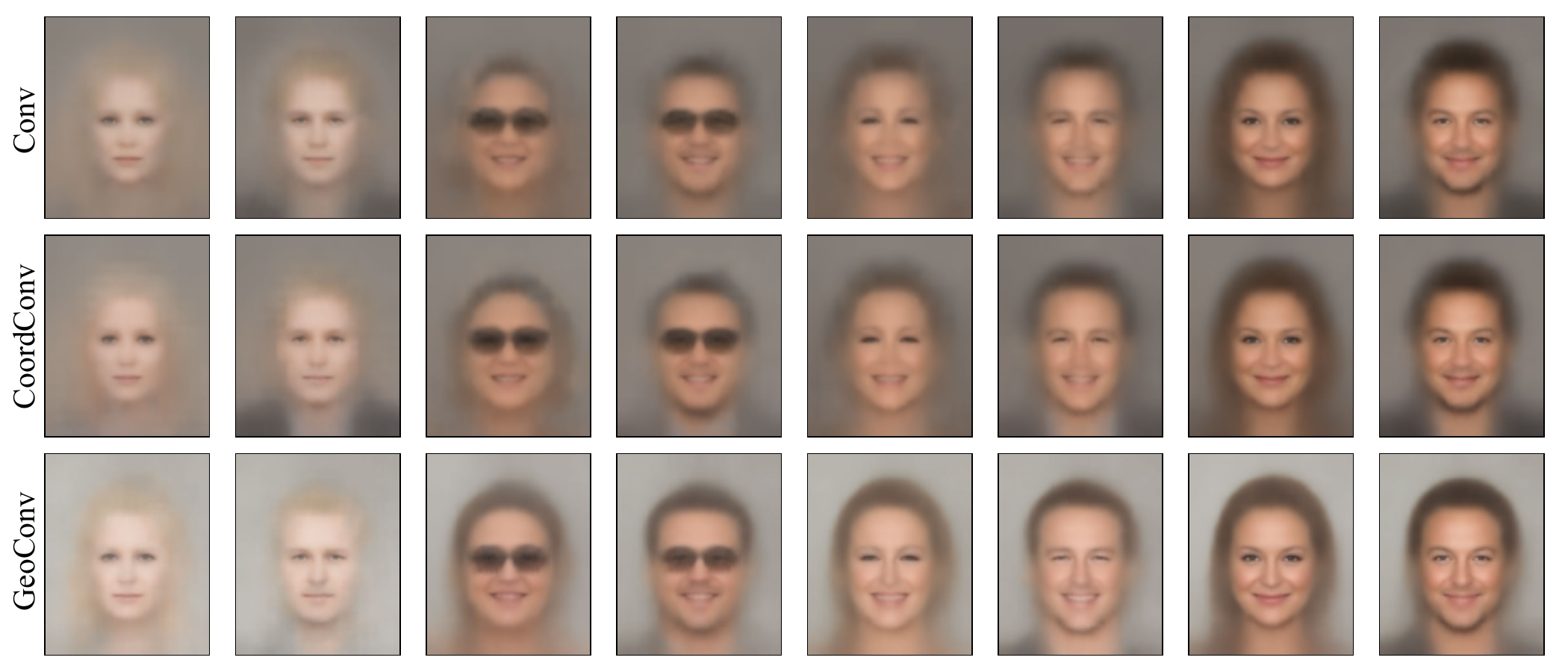}
  \caption{ Images generated by each VAE for different labels.
    Images generated by GeoVAE (bottom) are clearer, have sharper
    edges, and contain more details than those generated by ConvVAE
    (top) and CoordVAE (middle).}
  \label{fig: VAE Images by Label}
\end{figure}

Images generated by VAEs for different labels and latent
values after 30 epochs are shown in Figures \ref{fig: VAE Images by
  Label} and \ref{fig: VAE Images by Latent}, respectively. GeoVAE
demonstrates a notable capacity to produce diverse images given
different latent points. In stark contrast, ConvVAE and CoordVAE fail
to capture the dataset's diversity, generating similar
outputs for all latent points. GeoVAE also exhibits adaptability in
attributes like hairstyle, eye and eyebrow styles, and even skin
tones. Conversely, other models exhibit limited flexibility, yielding
less diverse images. Furthermore, GeoVAE consistently produces
higher-resolution images for all labels and latent points imbued with
more pronounced and distinctive features compared to ConvVAE and CoordVAE generations.

\begin{figure}[t]
  \centering
  \includegraphics[width=\columnwidth]{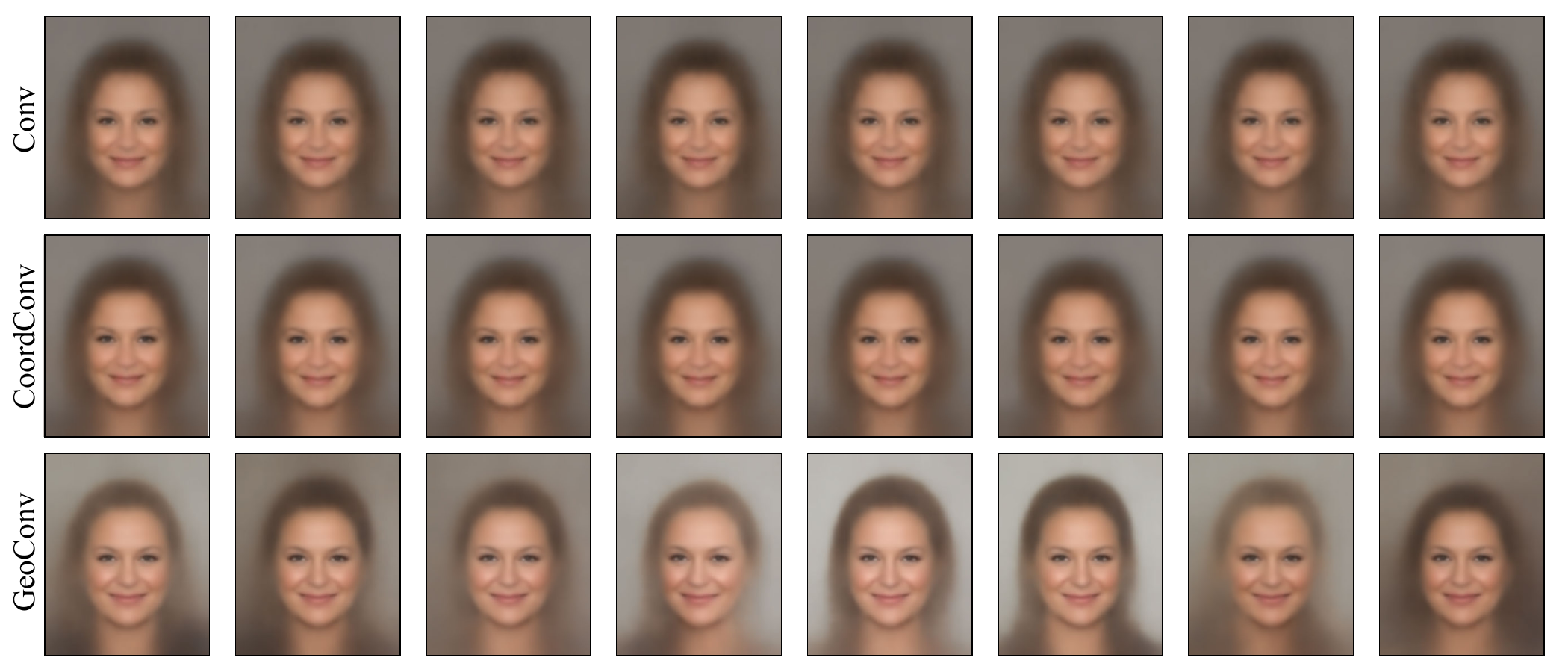}
  \caption{Images generated by each of the VAEs for different random
    latent points. Images generated by GeoVAE (bottom) are more
    diverse and vary with the latent, while images generated by
    ConvVAE (top) and CoordVAE (middle) remain untouched.}
  \label{fig: VAE Images by Latent}
  \vspace{-1em}
\end{figure}
\subsection{Monocular Depth Estimation}
We evaluate convolution, GeoConv, and CoordConv for monocular depth estimation, which also requires learning fine-grained geometric details. We trained three U-Net models on DIODE dataset \cite{Vasiljevic+19-Diode}, (using the three architectures) and as expected the results indicated superior performance in GeoConv and CoordConv compared to pure convolution (in terms of achieving lower validation loss), with GeoConv and CoordConv performing similarly, even though GeoConv achieves this with fewer parameters and computation.

\begin{table}[ht]
  \centering
  \label{tbl: Depth Estimation}
  \begin{tabular}{lccccc}
  Arch.     & Total & SSIM & Smooth. & L1 & L2\\
  \midrule
  Conv.     & 0.097 & 0.191 & 0.003 & 0.155 & 0.037 \\
  CoordConv & 0.094 & 0.187 & 0.003 & 0.152 & 0.035 \\
  GeoConv   & 0.093 & 0.186 & 0.001 & 0.151 & 0.034
  \end{tabular}
    \caption{The validation losses of a standard U-Net model from \href{https://keras.io/examples/vision/depth_estimation/}{keras.io} using different convolutions for monocular depth estimation (on normalised log-depth maps) trained on DIODE dataset. GeoConv and CoordConv perform better than standard convolution. Surprisingly, GeoConv slightly outerperforms CoordConv despite having fewer parameters. The ``Total'' loss is the average of SSIM, Smoothness, L1, and L2 losses.}
    \vspace{-1em}
\end{table}
%


%
\section{Discussion}
\label{sec: discussion}
In \cref{subsec: GAN}, we observed that GeoGANs generate more diverse
images that match training data's distribution compared to
ConvGANs. In the same way, we observe in \cref{subsec: VAE}, that
GeoVAEs show more variation in generating human faces for different
labels and latent points compared to CoordVAEs and ConvVAEs. Even
though the better performance of GeoConv models is expected, it
remains unclear and requires further investigation why and how GeoConv
models create more diverse images.

Another significant observation from \cref{fig: VAE Evaluation}, is
the remarkable consistency of GeoVAEs' loss curves across different
runs and latent dimensions. Intuitively, we expected GeoVAEs to
outperform their counterparts, but how this led to 5 and 11 times
smaller 95\% CI, in comparison to CoordVAEs and ConvVAEs, requires
additional exploration.


%
\section{Conclusions and future directions}
\label{sec: Conclusions}
In this paper, we demonstrated GeoConv's capabilities in consistently
producing better images with more details and diversity compared to
existing convolutional architectures. We showed this for GANs and VAEs
in generating hand gestures and human faces. Given that diffusion
models suffer from some of the same problems, in particular in
generating human hands, GeoConv provides a promising research avenue
to pursue in the future.

Given the promising performance of GeoConv in the models considered
here, we foresee it can improve large-scale SoA models, which we could
not investigate due to computational constraints. We plan to
investigate this further in our future work. Other avenues of research
that we foresee GeoConv will contribute to include geometric tasks,
such as depth estimation, object segmentation, 3D reconstruction,
video generation, and several other applications.


%
\section*{Acknowledgements}
This work is partially supported by the UK EPSRC via the Centre for
Doctoral Training in Intelligent Games and Game Intelligence (IGGI;
EP/S022325/1) and REXASI-PRO Horizon Europe project
(10.3030/10107002). We thank Vultr Cloud and Google Cloud for providing parts of the computational resources for the experiments. We also thank Zahraa Al Sahili for providing
feedback on previous versions of this work.


{\small
\bibliographystyle{wacv}
\bibliography{bibliography}
}

\newpage
\appendix
\clearpage

\section{Limitations}
\label{sec: limitations}
\paragraph{Technical limitations.} The generative models proposed and studied here are limited in size due to GPU constraints. The experiments included are meant to show the efficacy and efficiency of the proposed GeoConv. We believe they effectively illustrate the fundamental advantages of our approach. However, future work with more substantial computational resources could explore the scalability and performance of this framework in larger, more complex settings.

\paragraph{Societal impacts.} While our experiments demonstrate GeoConv's advantage in small to medium-scale applications, its potential efficacy in larger-scale implementations and in making generated images more realistic and detailed, particularly in areas such as accurate hand postures, could facilitate the creation of more convincing deepfakes. This underscores the need for robust watermarking techniques to mitigate potential misuse and ensure digital content's authenticity.

\section{Experimental setup}
\label{app: Experimental Setup}
In this appendix, we explain the experimental setup in
this paper and provide more images, figures, and tables.

\subsection{Centre of mass}
\label{app: Centre of Mass}
Our motivation for choosing this task and configuration is that it
requires the models to have a good understanding of the locations of a
varying number of points spread out in a 2-dimensional plane with a
few convolutional layers and filters. Therefore, the models need to
obtain a geometric and global knowledge of where the points are,
rather than a local knowledge provided by standard convolutions.
\begin{table*}[ht]
  \centering
  \label{tbl: Centre of Mass 1x1}
  \begin{tabular}{llrrrrrrr}
    & & \multicolumn{7}{c}{Test ratio}\\
    \cmidrule(lr){3-9}
    Train ratio & Architecture & 0.001   & 0.003   & 0.01    & 0.03    & 0.1     & 0.3      & 0.9     \\
    \toprule
    & GeoConv     & 2.581 & \textbf{3.598} & \textbf{18.87} & \textbf{79.65} & \textbf{296.7} & \textbf{916} & \textbf{2777} \\
    0.001 & CoordConv   & \textbf{2.267} & 4.630 & 27.02 & 107.5 & 392.9 & 1208  & 3654 \\
    & Conv & 2.438 & 4.622 & 24.88 & 100.7 & 370.3 & 1140  & 3449 \\
    \midrule
    & GeoConv     & 5.435 & 2.640 & \textbf{2.871} & \textbf{6.01} & \textbf{20.12} & \textbf{66.90} & \textbf{211.4} \\
    0.003 & CoordConv   & 4.530 & 2.112 & 3.553 & 18.33 & 76.43 & 242.8 & 742.3 \\
    & Conv & \textbf{4.356} & \textbf{2.104} & 4.025 & 21.25 & 87.28 & 276.4 & 844.0  \\
    \midrule
    & GeoConv     & 6.381 & 3.180 & 1.291 & 4.558  & 24.15 & 80.95 & 251.6  \\
    0.01 & CoordConv   & 9.380 & 4.875 & 1.971 & \textbf{2.978} & \textbf{8.45} & \textbf{14.17} & \textbf{15.1} \\
    & Conv & \textbf{6.329} & \textbf{3.145} & \textbf{1.261} & 4.608 & 24.48 & 82.07 & 255.1 \\
    \midrule
    & GeoConv     & 9.36 & 5.008 & 2.142 & 1.095  & \textbf{1.495} & \textbf{4.72} & \textbf{13.14} \\
    0.03 & CoordConv   & 11.15 & 6.370  & 2.803 & 1.145  & 4.580 & 11.74 & 14.90 \\
    & Conv & \textbf{6.84} & \textbf{3.668} & \textbf{2.133} & \textbf{0.890} & 7.321 & 29.40 & 95.90 \\
    \midrule
    & GeoConv     & \textbf{7.371} & \textbf{3.948} & 2.405 & 1.925 & 0.610 & 5.696 & 23.21 \\
    0.1 & CoordConv   & 7.548 & 4.042 & 2.377 & 1.837 & \textbf{0.601}  & 5.216 & 21.29 \\
    & Conv & 8.369 & 4.426 & \textbf{2.164} & \textbf{1.398} & 0.667 & \textbf{2.916} & \textbf{12.06} \\
    \midrule
    & GeoConv     & \textbf{6.942} & \textbf{3.859} & 2.875 & 2.988 & 2.440  & 0.350 & 7.430 \\
    0.3 & CoordConv   & 7.874 & 4.163 & 2.279 & 1.895 & 1.506  & \textbf{0.342} & 4.474 \\
    & Conv & 9.035 & 4.841 & \textbf{2.231} & \textbf{1.300} & \textbf{0.789} & 0.348 & \textbf{1.467} \\
    \midrule
    & GeoConv     & 9.228 & \textbf{5.114} & \textbf{2.61} & \textbf{1.75} & \textbf{1.26} & \textbf{0.888} & 0.349 \\
    0.9 & CoordConv   & \textbf{5.176} & 5.655 & 7.66 & 8.44 & 8.07 & 6.095 & \textbf{0.147} \\
    & Conv & 5.221 & 7.904  & 10.67   & 11.39  & 10.77  & 8.085  & 0.156 \\
  \end{tabular}
    \caption{The detailed loss table for 1x1 models in \cref{fig: Centre of Mass}.}
\end{table*}

\paragraph{Dataset details} To cover different scenarios and have a
comprehensive comparison between the architectures, we trained the
networks on 7 synthesised datasets, each containing \(100,000\)
images, of size \(32 \times 32\) with point density \(d\), where
\(d \in \sD = \set{0.001, 0.003, 0.01, 0.03, 0.1, 0.3, 0.9}\).  The
set \(\sD\) is roughly defined by the geometric progression
\(0.001 \times 3^k\) for \(0 \leq k \leq 6\), and covers a varying
range of points starting from \(0.001\) and increasing geometrically
with a factor of roughly \(3\), up to \(0.9\) density.

We then evaluated the performances of each of the networks on 7 test
datasets, each containing \(20,000\) images with density
\(d \in \sD\). All of the networks were trained using the Euclidean
distance, between the predicted mass centre and the true mass centre,
as the loss function. We have included the detailed results for the
base models, i.e., models with 1 convolution layer and 1 filter (this
is shown as 1x1 in \cref{fig: Centre of Mass}) in \cref{tbl: Centre of
  Mass 1x1}. Detailed results for other models can be easily obtained
by running the provided code.

\paragraph{Model design} All networks use convolution layers with a
kernel size of \(3\) and a stride of \(2\) with ReLU activation,
combined with a dense output layer with \(2\) nodes, corresponding to
the \(x\) and \(y\) coordinates of the mass centre. As an ablation
study, we consider 4 networks \(i\)x\(j\), where
\(0 \leq i, j \leq 2\).



%
\subsection{Positional dependencies}
\label{app: Positional Dependencies}
\paragraph{Detaset details}This dataset is designed to evaluate
positional bias, described in \cref{subsec: Positional Dependencies},
in vision models. This dataset includes \(64 \times 64\) images
containing Greek numbers I, II, and III, corresponding to labels 1, 2,
and 3. In the training set, the Greek numbers are almost centred in
the image, with little horizontal and vertical shifts, while in the
test sets the Greek numbers move farther from the centre the images.

\paragraph{Model Design} We consider convolutional models with varying
number of layers ranging over \(1, \dots, 5\). The \(n\)-th layer in
each model has \(2^{n-1}\) filters. All convolutions layers have
kernel size of \(3\) and use a stride of \(2\), with ReLU
activation. The only other layer, is the output layer, which is a
dense layer of size 3. The models are trained on the training set
using the categorical cross entropy loss, which is the standard choice
for multi-class classification tasks.

\begin{table*}[h!]
  \centering
  \label{tbl: Greek Numbers Full}
  \begin{tabular}{llrrrrr}
    Metric & Architecture                & 1 Layer        & 2 Layers & 3 Layers  & 4 Layers & 5 Layers   \\
    \toprule
    \multirow{ 4}{*}{Loss}  & Conv2D &   \textbf{1.16}        & 1.26     & 1.60 & 2.20 & 2.95 \\
                & CoordConv & 1.78          & 2.35     & 2.45 & 2.06 & 2.91\\
                    & GeoConv & 1.23          & 1.63     & \textbf{1.50} & \textbf{1.59} & \textbf{2.04} \\
                    & CoordConv + Shift & 1.43 & \textbf{1.20} & 1.55 & 1.90 & 2.20 \\
    \midrule
     \multirow{ 4}{*}{Acc. (\%)}   & Conv2D & \textbf{36.82} & \textbf{35.25}     &  34.06    & 34.00 & \textbf{34.40}    \\
                    & CoordConv & 34.08 & 34.03     & 34.35     & 34.17 & 34.10    \\
                    & GeoConv & 34.93 & 34.29     & \textbf{36.20}     & \textbf{34.35} & 33.76   \\
                & CoordConv + Shift & 34.70 & 34.24 & 34.35 & 33.81 & 33.27 \\    
  \end{tabular}
    \caption{The average loss and accuracy of the models when the
    numbers are moved to all of the possible positions in a
    \(64 \times 64\) canvas. In addition to the common baselines in all other experiments, we also studied CoordConv with positional shift to see the impact of adding positional shift to CoordConv. }
\end{table*}

As you can see in \cref{tbl: Greek Numbers}, despite having the
highest number of learnable parameters, CoordConv has the worst
performance amongst all the architectures due to the positional bias
learnt during the training. The complete results that \cref{tbl: Greek
  Numbers} is derived from is available in \cref{tbl: Greek Numbers
  Full}.

\subsection{GAN}
\label{app: GAN}
In this section of the appendix, we discuss the details of the
experiments in \cref{subsec: GAN}.

\subsubsection{Dataset details}

\paragraph{\CelebAHQ{} dataset}
\CelebAHQ{} \cite{Karras+18}, introduced in 2018, is a dataset
consisting of 30,000 human face images with \(1024 \!\times\! 1024\)
resolution. Since its introduction, it has been widely used in various
applications for generating realistic human faces. Unlike \CelebA{}
dataset, \CelebAHQ{} does not include annotations on facial
features. This dataset includes 18,943 (63.15\%) female images and
11,057 (36.85\%) male images \cite{Na21}. We use this dataset in our
GAN experiments to gain insights on the capabilities and limitations
of models using different convolution architectures.

\paragraph{ASL Hand Gesture dataset} \label{par: app ASL dataset} ASL
Hand Gesture \cite{Massey+11} is a small dataset consisting of 2,524
annotated hand gesture images representing numbers `\emph{0}' to
`\emph{9}' and English alphabets `\emph{a}' to `\emph{z}' in the
American sign language. The dataset images are almost equally
distributed between all the 36 labels; there are approximately 70
images per each labels. All images are on a black background and of
different sizes, which are resized to \(256 \!\times\! 256\)
resolution at preprocessing.


%
\subsubsection{GANs for for generating face images}
\label{Par: GAN-Face Model Design}

\paragraph{Model design}
The generator and discriminator are designed according to common
practices in training GANs. The discriminator's architecture is
similar to VGG-13. Here, we discuss the generator architecture. In the
generator, after one dense layer, and a reshape layer that takes the
1-dimensional latent to a 3-dimensional tensor, we have 5 blocks of
layers, each consisting of the following 3 layers:
\begin{itemize}
\item A transposed convolution/GeoConv/CoordConv layer with a stride
  of 2 and kernel size of 3 with no padding and leaky ReLU activation.
\item A convolution/GeoConv/CoordConv layer with a stride of 1 and
  kernel size of 3 with leaky ReLU activation.
\item A batch normalization layer.
\end{itemize}
In the end, the output layer of the generator is a
convolution/GeoConv/CoordConv layer with the same specification as
before except for the activation which is sigmoid.

\paragraph{Training detail}
For training the models in this experiment, we use the binary
cross-entropy loss which is the common method for training GANs. We
trained each model for 500 epochs. After reaching 400 epochs, none of
the models showed any improvements.

\paragraph{A closer look at generated images}
\cref{fig: App GANs} portrays a \(6\!\times\!6\) canvas with more
images of ConvGAN and GeoGAN. Notice the quality, colour, and
diversity of the images by each of the models.

\begin{figure*}[t]
  \centering
  \begin{minipage}[b]{0.48\textwidth}
    \centering
    \includegraphics[width=\textwidth]{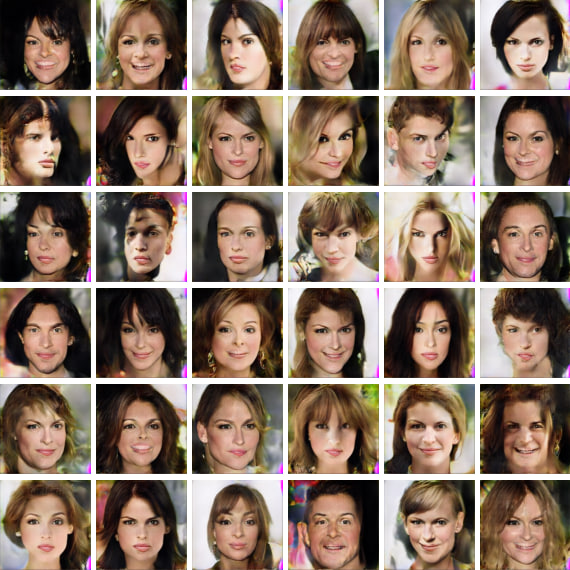}
     \caption*{{\small (a) ConvGAN}}
  \end{minipage}  
  \hfill
  \begin{minipage}[b]{0.48\textwidth}
    \centering
    \includegraphics[width=\textwidth]{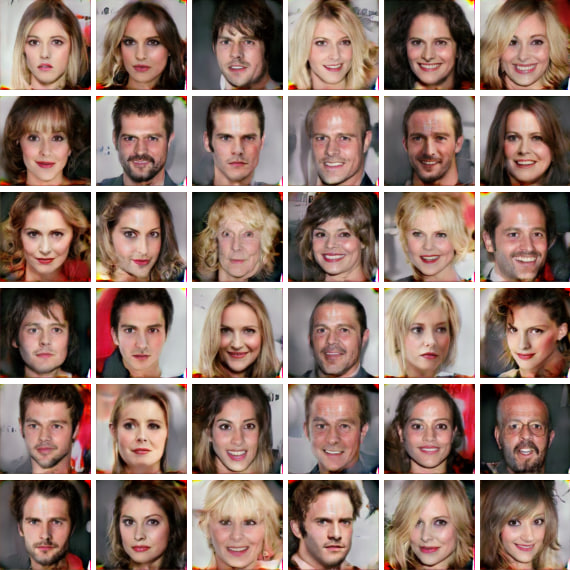}
    \caption*{{\small (b) GeoGAN}}
  \end{minipage}
  \caption{Human faces generated by ConvGAN (\ref{fig: App GANs}a)
    and GeoGAN (\ref{fig: App GANs}b) trained on \CelebAHQ{}
    dataset. Each image is generated as follows. For each of the
    models, we generated 10 images from randomly sampled latent
    points. The image with the highest score from the discriminator is
    added to the canvas. This is repeated 36 times for a
    \(6 \!\times\! 6\) canvas.}
  \label{fig: App GANs}
\end{figure*}
\subsubsection{WGAN-GPs for generating face images}
\label{subsubsec: App WGAN-GP Hands}

\paragraph{Model design} The design of the generator and discriminator
in this section is similar to the generator and discriminator
explained in \cref{Par: GAN-Face Model Design}.

\paragraph{Training detail}
WGAN-GPs use Wasserstein distance for their loss alongside gradient
penalty. Since none of the models showed any improvements after around
100 epochs, we set the number of epochs to 150.

\paragraph{A closer look at generated images} \cref{fig: App WGANs}
portrays a \(6\!\times\!6\) canvas with more images generated by the
WGAN-GPs. Notice the quality, colour, and diversity of the images generated by each of the models.

\subsubsection{WGAN-GPs for generating hand gestures}
\paragraph{Model design}
The design of the generator and discriminator in this section is
similar to the generator and discriminator explained in \cref{Par:
  GAN-Face Model Design}.

\paragraph{Training detail}
Training details are similar to \cref{subsubsec: App WGAN-GP Hands},
except that we run the experiments for 1,000 epochs to make sure all
the models reach peak performance.

\paragraph{A closer look at the generated images} \cref{fig: App
  WGAN-GP Hands} shows the hand gestures generated by both ConvWGAN-GP
and GeoWGAN-GP for each label of the ASL language. These are the same
images as in \cref{fig: WGAN-GP Hands}; however, they have been scaled
up for visualising more details and easier comparison between the images generated by each of the models.

\begin{figure*}[t]
  \centering
  \begin{minipage}[b]{0.48\textwidth}
    \centering
    \includegraphics[width=\textwidth]{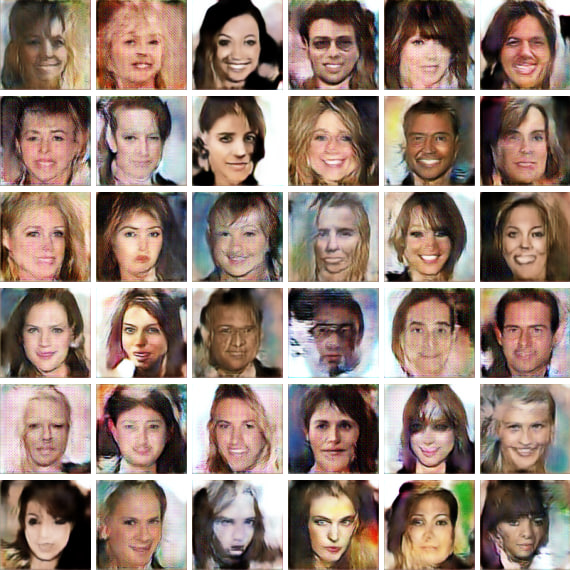}
    \caption*{{\small (a) ConvWGAN-GP}}
  \end{minipage}
  \hfill
  \begin{minipage}[b]{0.48\textwidth}
    \centering
    \includegraphics[width=\textwidth]{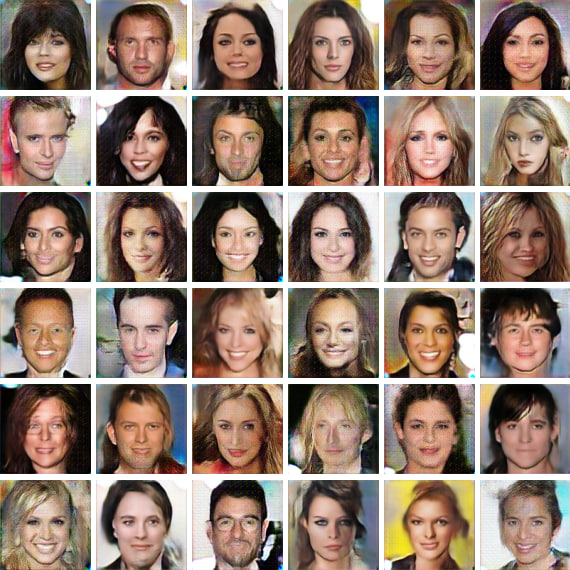}
    \caption*{\small{ (b) GeoWGAN-GP}}
  \end{minipage}
  \caption{{\small Human faces generated by ConvWGAN-GP (\ref{fig: App
        WGANs}a) and GeoWGAN-GP (\ref{fig: App WGANs}b) trained on
      \CelebAHQ{} dataset. Each image is generated as follows. For
      each of the models, we generated 10 images from randomly sampled
      latent points. The image with the highest score from the
      discriminator is added to the canvas. This is repeated 36 times
      for a \(6 \!\times\! 6\) canvas.}}
  \label{fig: App WGANs}
\end{figure*}
\begin{figure*}[h]
  \centering
  \includegraphics[width=\textwidth]{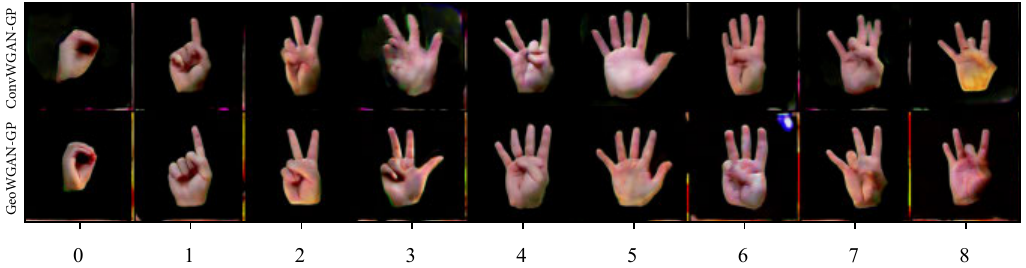}
  \includegraphics[width=\textwidth]{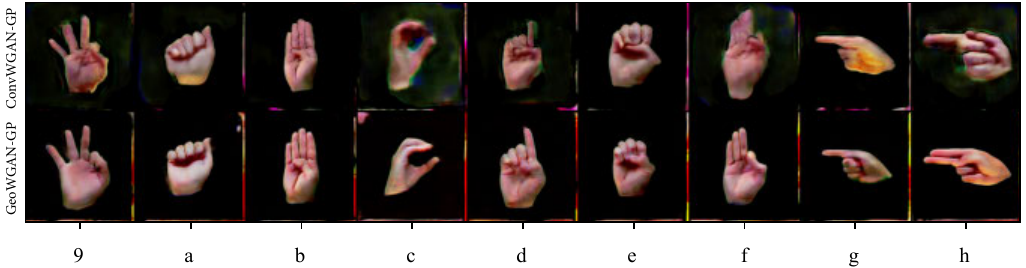}
  \includegraphics[width=\textwidth]{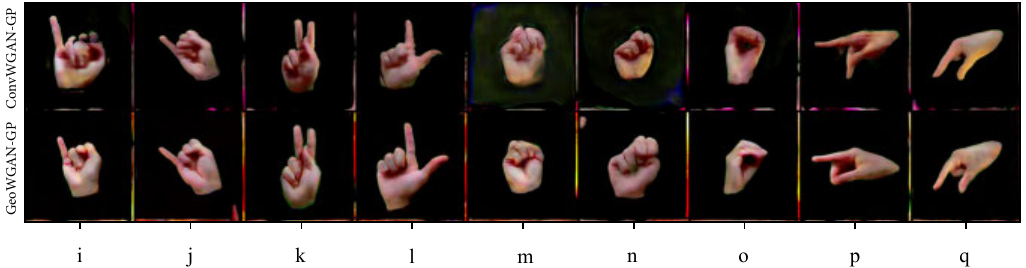}
  \includegraphics[width=\textwidth]{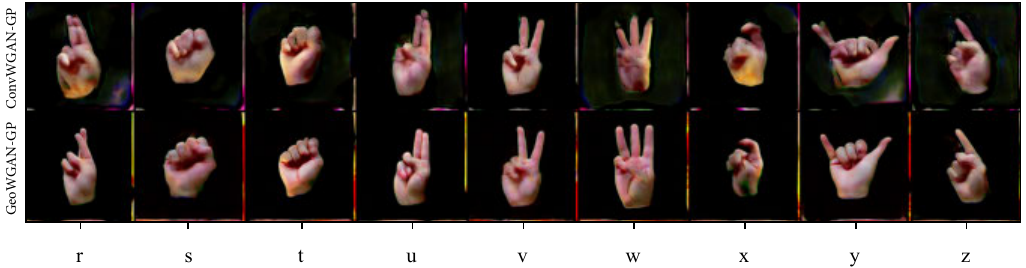}

  \caption{Hand gestures generated by ConvWGAN-GP (first rows), and
    GeoWGAN-GP (second rows), trained on the ASL Hand Gesture
    dataset. These are copies of images included in \cref{fig: WGAN-GP
      Hands} of the main body, scaled up for better comparison.}
  \label{fig: App WGAN-GP Hands}
\end{figure*}
\subsection{VAE}
In this section of the appendix, we discuss the details of the
experiments in \cref{subsec: VAE}.

\subsubsection{Loss function}
\label{app: VAE}S In VAEs, since the quality of generated images is
closely associate to the loss function, we chose a loss function that
helps training a model that not only generates images from the same
distribution as the train images, but also helps generating images
that are sharper and have similar structural similarity. Therefore, we
chose the loss the function to be a combination of
\begin{itemize}
\item \textbf{Binary Cross Entropy (BCE).} BCE loss is used as a
  pixel-wise reconstruction loss in VAEs. It encourages the VAE to
  produce reconstructions that are statistically similar to the input
  data in a pixel-wise manner.
  
\item \textbf{Mean Squared Error (MSE).} MSE penalises large
  pixel-wise differences more heavily and is more sensitive to
  outliers than BCE.

\item \textbf{Mean Absolute Error (MAE).} MAE is less sensitive to
  outliers than MSE. Like MSE, it helps reduce pixel-wise differences
  between input and reconstruction, though the magnitude of errors is
  emphasised differently.

\item \textbf{Multi-scale Structural Similarity (SSIM):} SSIM
  \cite{wang+03} assesses structural similarity between images,
  considering luminance, contrast, and structure. It helps capture
  high-level features and generate images that are structurally more
  similar to the training images.

\item \textbf{Absolute difference of Sobel edge maps:} Sobel edge maps
  highlight edges and gradients in images. Penalising the absolute
  difference between these maps encourages the VAE to reproduce edges
  accurately. It helps improve the sharpness and structural details in
  generated images.
\end{itemize}

\subsubsection{Dataset details}

\paragraph{\CelebA{} dataset}
\CelebA{} dataset is one of the most commonly used datasets in both
generative and discriminative applications in computer vision. This
dataset includes 200k human face images. Each image comes with 40
binary attribute annotations about different features such as
eyebrows, cheeks, nose, hair, eyeglasses, neckties, etc.

\paragraph{ASL Hand Gesture dataset}
Please see \cref{par: app ASL dataset}.

\subsubsection{Conditional VAEs for generating face images}
\label{subsubsec: CVAE-Face}
  
\paragraph{Model design}
Both encoder and decoder are designed according to standard
practices. The encoder first feeds the input image through three
convolution/GeoConv/CoordConv layers consecutively. All these layers
have a kernel size of 3 and a stride of 2 and use ReLU activation. The
result is then flattened and concatenated with the label. Then, we use
two dense layers to learn the mean and standard deviation of the
latent space. Then, a latent is sampled using the normal distribution
with this mean and standard deviation.

This latent and the label are then fed into the decoder which will
generate an image reconstructing the original image. After that, 5
transposed convolution/GeoConv/CoordConv layers consecutively expand
the feature map. Each of those layers has a kernel size of 3 and a
stride of 2 and uses ReLU activation. Finally, a
convolution/GeoConv/CoordConv layer with 3 channels, kernel size of 3
and a stride of 1 with sigmoid activation, synthesises the final
image.

\paragraph{Training detail}
We explained the loss function we use for training the VAEs in
\cref{app: VAE}. During the training, the loss curves start to flatten
out after 20 epochs. Nonetheless, we continued training the VAEs until 30
epochs.

\subsubsection{Conditional VAE for generating hand gestures}
The findings from the experiment on VAEs presented in the main body
show the significant enhancements achieved by incorporating GeoConv
into a VAE. These enhancements are observed both in qualitative and
quantitative performance, as well as in a heightened capacity to
capture the dataset's diversity. In alignment with our experiments in
the GAN section, in this section, we use VAEs for generating images of
ASL hand gestures.

\begin{figure*}[t]
  \centering
  \begin{minipage}[b]{0.33\textwidth}
    \centering
    \includegraphics[width=\textwidth]{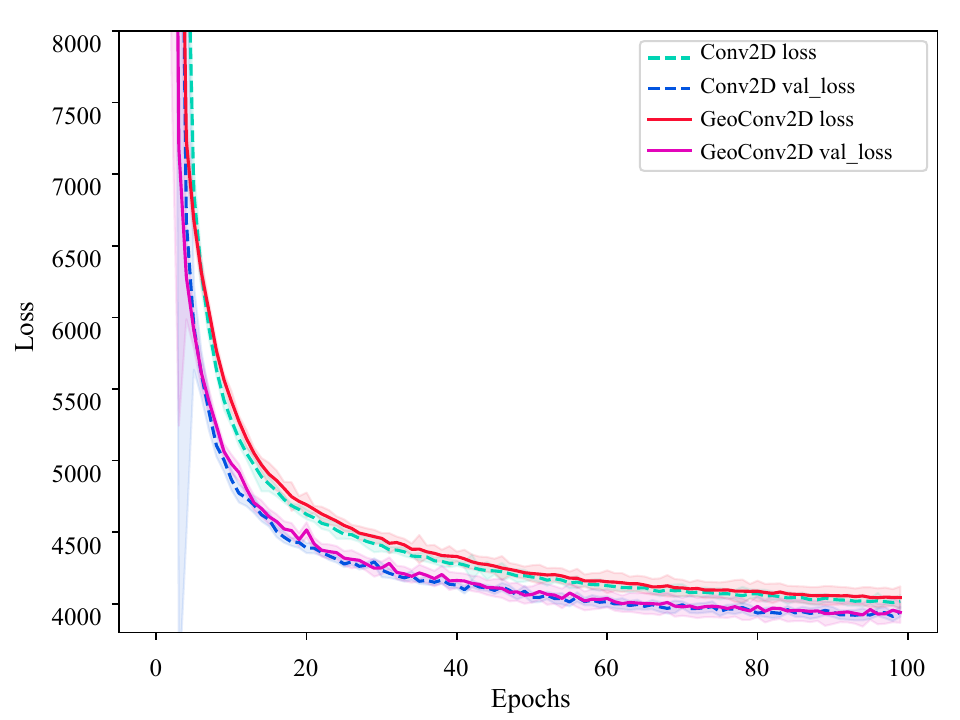}
    \caption*{{\small \(d = 64\)}}
  \end{minipage}
  \hfill
  \begin{minipage}[b]{0.33\textwidth}
    \centering
    \includegraphics[width=\textwidth]{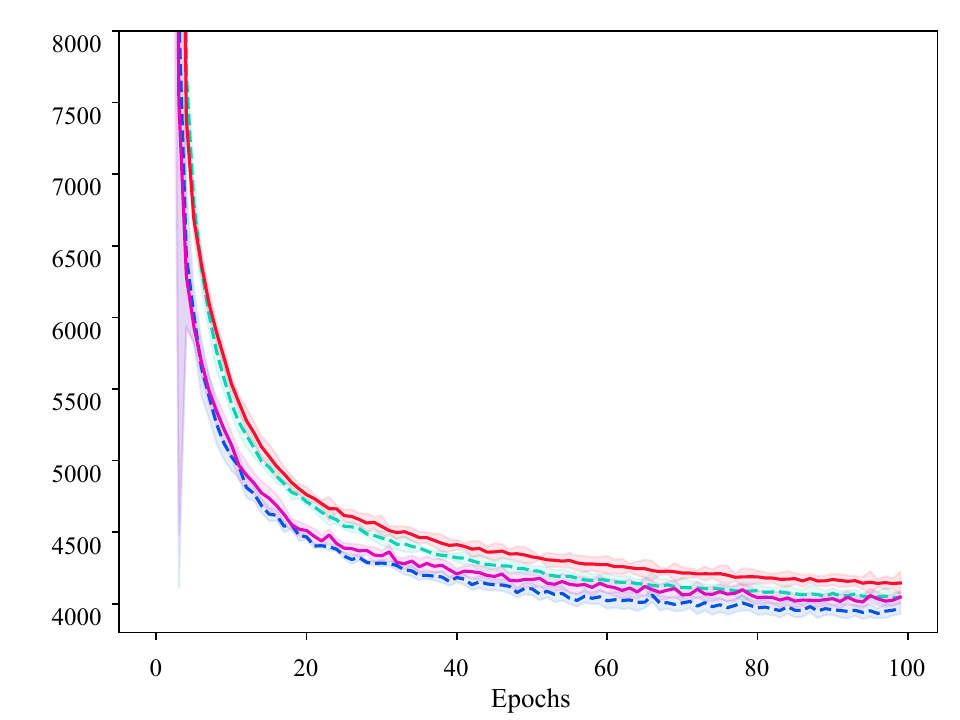}
    \caption*{{\small \(d = 128\)}}
  \end{minipage}
  \hfill
  \begin{minipage}[b]{0.33\textwidth}
    \centering
    \includegraphics[width=\textwidth]{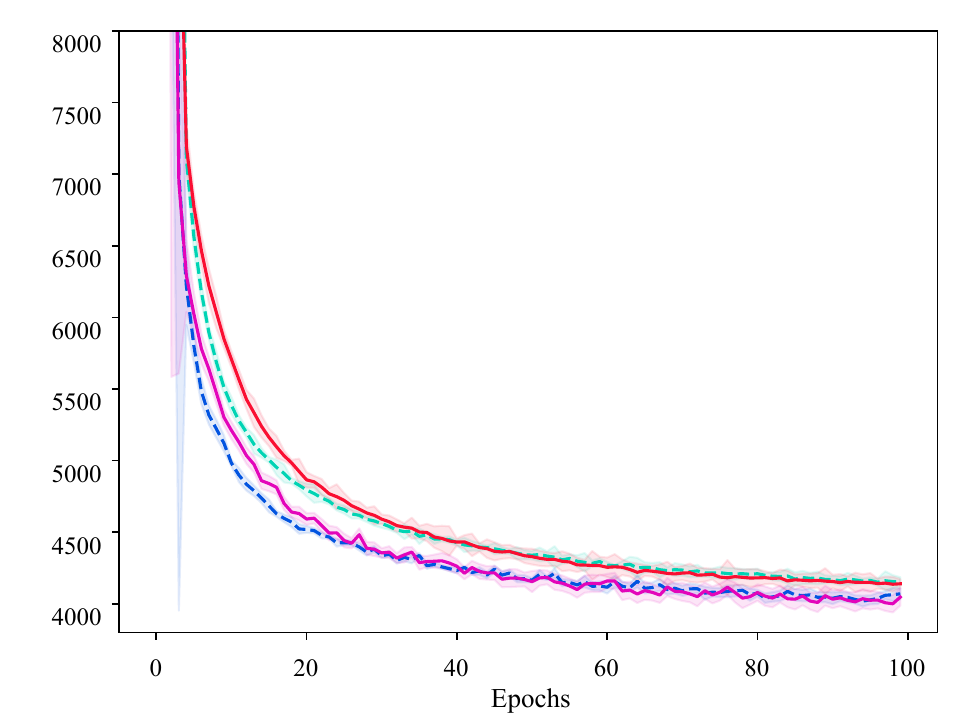}
    \caption*{{\small \(d = 192\)}}
  \end{minipage}
  \caption{{\small Mean and 95\% CI of train and validation losses of GeoVAE
    (red lines), and ConvVAE (dashed blue lines), trained on Hand
    Gesture dataset for latent dimensions \(d \in \{64, 128, 192\}\)
    over five runs with seeds \(0, \dots, 4\) during 100 training
    epochs.}}
  \label{fig: CVAE Evaluation on Hands}
\end{figure*}
\begin{figure*}[h]
  \centering
  \begin{tikzpicture}
    \node[inner sep=0pt] (vanilla-image) at (-0.5,2.5)
    {\includegraphics[width=1.0\textwidth]{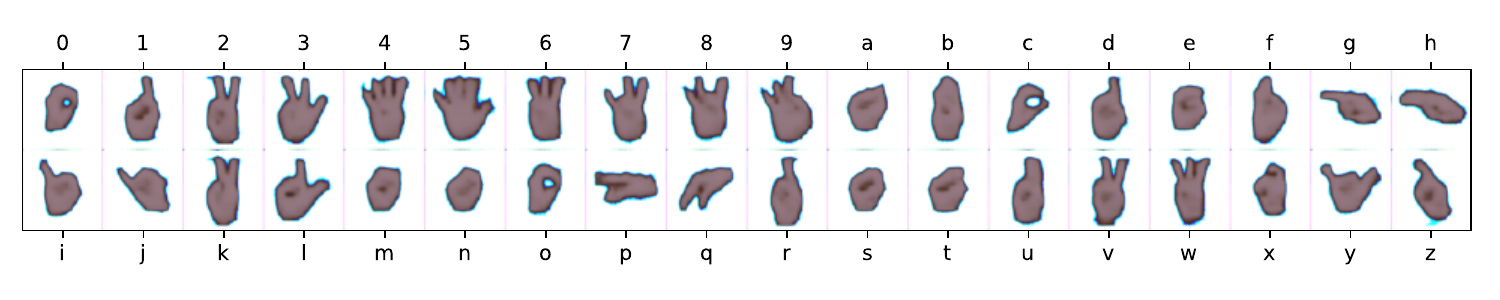}};
    \node[align=center, rotate=90, font=\footnotesize] at ([xshift=-3pt]vanilla-image.west) {ConvVAE};
    
    \node[inner sep=0pt] (geo-image) at (-0.5,-.8)
     {\includegraphics[width=1.0\textwidth]{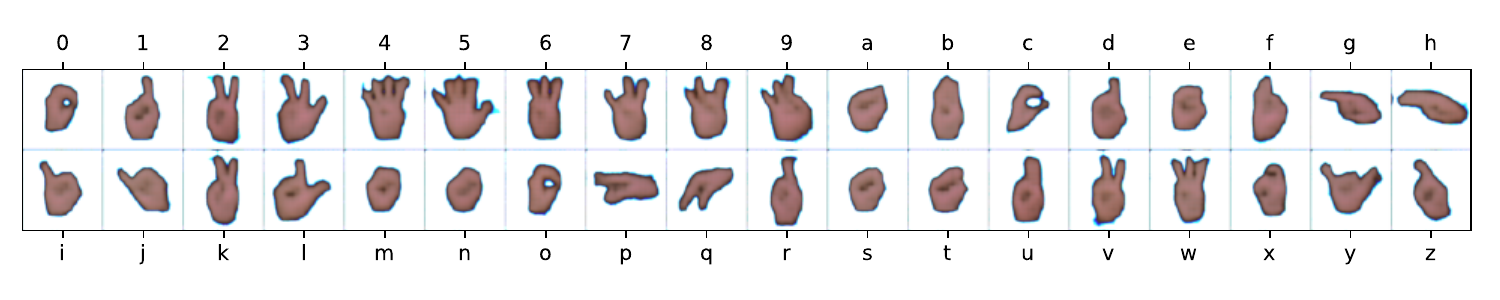}};
    \node[align=center, rotate=90, font=\footnotesize] at ([xshift=-3pt]geo-image.west) {GeoVAE};
  \end{tikzpicture}
  \caption{Hand gestures generated by ConvVAE (top row) and GeoVAE
    (bottom row) with 192-dimensional latent spaces. Images generated
    by GeoVAE have more realistic colours and are slightly sharper.}
  \label{fig: CVAE generations on hands}
\end{figure*}

While \CelebA{} is a vast and diverse collection, comprising
approximately 200k human face images, the hand gesture dataset only
contains just over 2,500 images, each sharing a similar appearance,
primarily differing based on the represented alphabet or
number. Consequently, this dataset introduces a distinct set of
challenges for the VAEs. We train two conditional VAEs on the the
gesture dataset for 100 epochs. We use the same architecture for the
VAEs as in \cref{subsubsec: CVAE-Face}, only differing in some
hyperparameters.

    


We run experiments using latent dimensions 64, 128, and
192. Additionally, each VAE is trained five times, with seeds
\(0, 1,\dots, 4\). The training and validation loss during 100 epochs
of training are visualised in \cref{fig: CVAE Evaluation on Hands}. As
anticipated, training and validation losses are similar for both
architectures across various latent dimensions. As we discussed
before, this is because of the Hand Gesture dataset's small size and
limited diversity.

%

%

\cref{fig: CVAE generations on hands} presents the generated images
produced by each of the conditional VAEs. Both models perform
reasonably well in representing the correct gestures even though they
do not produce high-resolution images compared to GANs. Digging deeper
into the details, images generated by GeoVAE have more realistic
colours and sharper details such as more distinct fingers in comparison to the ConvVAE.

\section{Additional Experiments on Diffusion Models}

To compare GeoConv with CoordConv and Conv2D algorithms in more complex CNN settings, we trained Denoising Diffusion Probabilistic Models (DDPM) on the Smithsonian Butterflies dataset for 30 epochs. The implementation details including architecture and hyperparameters for our diffusion model use the standard DDPM implementation in Keras website\footnote{https://keras.io/examples/generative/ddpm/}. 

\cref{fig: Diffusion Butterflies} show images generated by models based on each architecture. For generating the images, 16 random noises were sampled and fed to all networks. Then they went through the denoising process and the output is presented in these images.

\begin{figure*}[t]
  \centering
  \begin{minipage}[b]{0.32\textwidth}
    \centering
    \includegraphics[width=\textwidth]{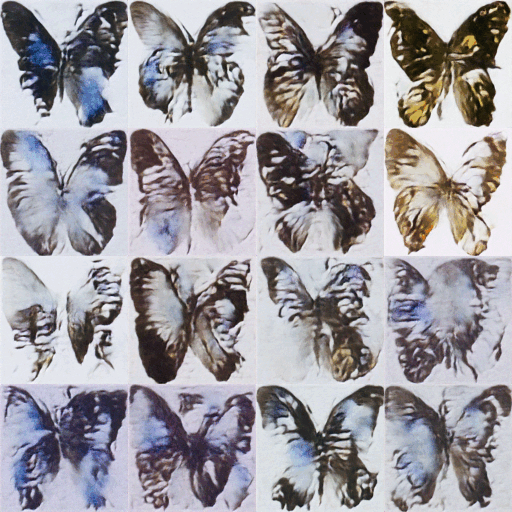}
    \caption*{{\small Conv2D}}
  \end{minipage}
  \hfill
  \begin{minipage}[b]{0.32\textwidth}
    \centering
    \includegraphics[width=\textwidth]{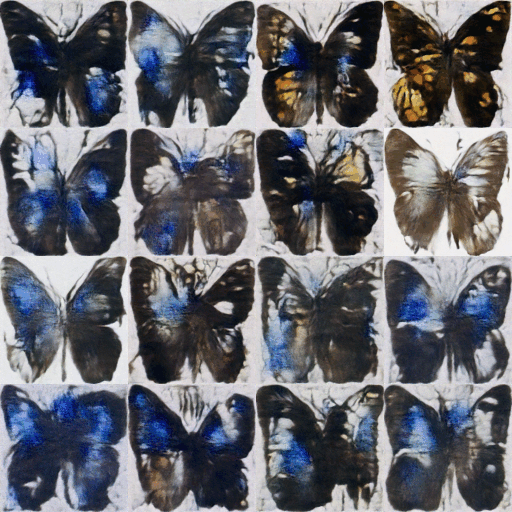}
    \caption*{{\small CoordConv}}
  \end{minipage}
  \hfill
  \begin{minipage}[b]{0.32\textwidth}
    \centering
    \includegraphics[width=\textwidth]{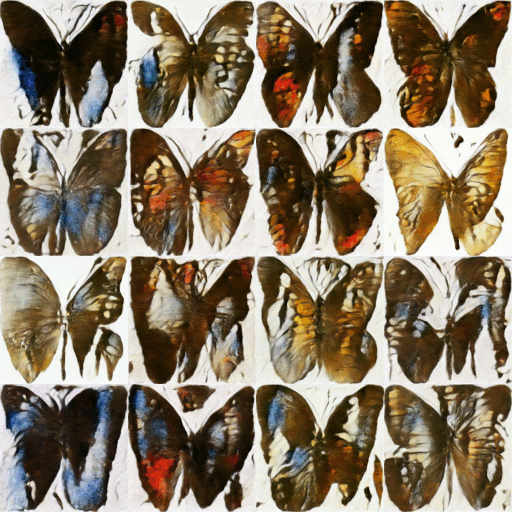}
    \caption*{{\small GeoConv}}
  \end{minipage}
  \caption{{\small Denoising Diffusion Probabilistic Models (DDPM) trained on the Smithsonian Butterflies dataset for 30 epochs. The images generated by GeoConv capture the colour diversity and geometric complexities of butterflies better compared to CoordConv and Conv2D.}}
  \label{fig: Diffusion Butterflies}
\end{figure*}
\begin{table}[ht]
  \centering
  \label{tbl: Quantitative Diffusion Results}
  \begin{tabular}{lccccc}
  Arch.     & Noise loss & Image loss & KID \\
  \midrule
  Conv.     & 0.124 & 0.212 & 0.45 \\
  GeoConv   & 0.097 & 0.146 & 0.35 \\
  CoordConv & 0.110 & 0.161 & 0.41
  \end{tabular}
    \caption{Performance metrics of DDPM models based on studied architectures on the validation set of Smithsonian Butterflies dataset. Mean Absolute Error (MAE) is used for assessing Noise and Image losses. In addition, we also report the Kernel Inception Distance (KID) as a metric for reflecting the quality and diversity of image generation. For all three metrics, lower values mean better results. GeoConv performs favourably to the other two convolutional layers in all metrics.}
\end{table}

\section{Speed analysis}
\subsection{Theoretical analysis: number of FLOPs}
Let us introduce a few notations for each of the variables involved to investigate the number of FLOPs required for performing a forward pass on each Convolutional layer architecture in a two-dimensional space. These include:
\begin{itemize}
  \item Input dimensions: Width ($W$), Height ($H$), and number of input channels ($C_{in}$)
  \item Kernel (or Filter) dimensions: Kernel Width ($K_W$) and Kernel Height ($K_H$)
  \item Parameter specifications: Stride ($S$), Padding ($P$), and number of output channels ($C_{out}$) 
\end{itemize}

Based on this notation, the number of FLOPs in a forward pass for each architecture can be calculated according to the formula in \cref{tab:FLOPs} below.

\begin{table}[h]
  \centering
  \begin{tabular}{ll}
    Architecture & FLOPs\\
    \midrule
    Conv2D       & $2 H_{out} W_{out} K_H K_W C_{in} C_{out}$\\
    GeoConv2D    & $2 H_{out} W_{out} K_H K_W (C_{in} + 1) C_{out}$\\
    CoordConv2D  & $2 H_{out} W_{out} K_H K_W (C_{in} + 2) C_{out}$
  \end{tabular}
  \caption{Number of FLOPs required for each convolutional layer architecture to complete the forward pass. GeoConv adds 50\% less FLOPs compared to CoordConv to the vanilla convolution.}
\label{tab:FLOPs}
\end{table}

where $H_{out}$ and $W_{out}$ are defined as:
\begin{equation}
  \begin{split}
    H_{out} & = [(H - K_H + 2P) / S] + 1\\
    W_{out} & = [(W - K_W + 2P) / S] + 1
  \end{split}
\end{equation}

As these equations indicate, GeoConv adds half as many FLOPs compared to CoordConv to the convolutional layer. It is also worth noting that with higher input dimensions, this superiority even becomes more evident. For example, if the input is 3D, GeoConv adds one-third as many FLOPs compared to CoordConv.

\subsection{Experimental analysis: train and inference}
In addition to the theoretical analysis provided above, here we report the training and inference time for some of our diffusion experiment for further clarification in \cref{tbl: Train and Inference time}.

\begin{table}[ht]
  \centering
  \begin{tabular}{lccccc}
  Arch.     & Train Time & Inference time \\
  \midrule
  Conv.     & 33.7 & 0.145 \\
  GeoConv   & 38.3 & 0.229 \\
  CoordConv & 41.8 & 0.308
  \end{tabular}
    \caption{Train time per one epoch of training and Inference time (50 denoising steps) for generating one image for each model based on different architectures in the Denoising Diffusion Probabilistic Models (DDPM) experiment. As expected vanilla convolution is the fastest but GeoConv has a 8.4\% faster training time and 25.6\% faster inference speed compared to CoordConv.}
  \label{tbl: Train and Inference time}
\end{table}
\section{Proofs}
\label{sec: Proofs}
\begin{proof}[Proof of \cref{thm: Filter Collapse}]
  Let us use the same notation as in \cref{thm: Positional
    Dependency}.  Since the proof is similar for all coordinate
  channels, we only prove this for the first channel. Let
  \(f = (f_{i_1,\dots,i_n})\) be the convolution filter corresponding
  the first coordinate channel \(c\) in CoordConv. Let
  \begin{equation}
    \bar{f}_{i_1} = \sum_{i_2, \dots, i_n} f_{i_1, \dots, i_n},
  \end{equation}
  where \(1 \leq i_k \leq s_k\) for \(1 \leq k \leq n\). At each step
  of the convolution operation, we have that
  \begin{equation}
    \label{eq: Filter Collapse}
    \begin{split}
    \sum_{i_1, \dots, i_n} & f_{i_1, \dots, i_n} c_{i_1 + j_1, \dots, i_n + j_n}\\
    =&\:\ \sum_{i_1} (\sum_{i_2, \dots, i_n} f_{i_1, \dots, i_n}) c_{i_1 + j_1, j_2, \dots, j_n}\\
    =&\:\ \sum_{i_1} \bar{f}_{i_{\ell}} c_{i_1 + j_1, j_2, \dots, j_n}.
    \end{split}
  \end{equation}
  Hence, the \(s_1 \times \dots \times s_n\) filter \(f\) does not
  extract any more information from the first coordinate channel \(c\)
  than the \(s_1 \times 1 \times \dots \times 1\) filter
  \(\bar{f} = (\bar{f}_{i_1})\).
\end{proof}

\begin{proof}[Proof of \cref{thm: Equivalence}]
  Let us use the notation in the proofs of \cref{thm: Positional Dependency,thm: Filter Collapse}. We use \(\vi\) to refer to the tuple \((i_1, \dots, i_n)\) and drop the \(j\) indices (Similar to those appearing in \cref{eq: Filter Collapse}) for the sake of brevity. Now, if we denote the filters by \(f\), input tensor by \(x\), and coordinate channels in CoordConv by \(c\), then, we have that
  \begin{equation}
    \label{eq: CoordConv}
    f * (x, c) = f^{(1, \dots, k)} \! * x \ + \ f^{(k+1, \dots, k+n)} \! * c.
  \end{equation}
  Similarly for GeoConv, if show the GeoPos channel by \(g\), then we have that
  \begin{equation}
    \label{eq: GeoConv}
    f * (x, g) = f^{(1, \dots, k)} \! * x \ + \ \bar{f} \! * g,
  \end{equation}
  where \(\bar{f}\) is in fact \(f^{(k+1)}\); however, to avoid confusion with \(f^{(k+1)}\) in \cref{eq: CoordConv}, we use \(\bar{f}\) for the filter corresponding to the GeoPos channel.

  Now, we need to prove that for any \(f^{(k+1, \dots, k+n)}\) with \(s_1, \dots, s_n \geq 2\) kernel size, there exists \(\bar{f}\) of the same kernel size, such that
  \begin{equation}
    \label{eq: GeoPos vs CoordPos}
    f^{(k+1, \dots, k+n)} \! * c = \bar{f} \! * g.
  \end{equation}

  The LHS of \cref{eq: GeoPos vs CoordPos} can be expanded as
  \begin{equation}
    \label{eq: CoordConv Expansion}
    \begin{split}
      f^{(k+1, \dots, k+n)} \! * c
      & = \sum_{\vi} f_{\vi}^{(k+1, \dots, k+n)} c_{\vi + \vj}^{(1, \dots, n)}\\
      & = \sum_{t=1}^n \sum_{\vi} f_{\vi}^{(k+t)} c_{\vi + \vj}^{(t)}\\
      & = \sum_{t=1}^n \sum_{i_t} \sum_{\vi\backslash i_t} f_{\vi}^{(k+t)} c_{\vi + \vj}^{(t)}\\
      & = \sum_{t=1}^n \sum_{i_t} \left(\sum_{\vi\backslash i_t} f_{\vi}^{(k+t)}\right) c_{\vi + \vj}^{(t)}\\
    \end{split}
  \end{equation}

  The RHS of \cref{eq: GeoPos vs CoordPos} can be expanded as
  \begin{equation}
    \label{eq: GeoConv Expansion}
    \begin{split}
      \bar{f} \! * g
      = \sum_{\vi} \bar{f} g_{\vi} & = \frac{1}{t} \sum_{t=1}^n \sum_{\vi} \bar{f}_{\vi} c_{\vi + \vj}^{(t)}\\
      & = \frac{1}{t} \sum_{t=1}^n \sum_{i_t} \sum_{\vi\backslash i_t} \bar{f}_{\vi} c_{\vi + \vj}^{(t)}\\
      & = \frac{1}{t} \sum_{t=1}^n \sum_{i_t} \left(\sum_{\vi\backslash i_t} \bar{f}_{\vi}\right) c_{\vi + \vj}^{(t)}\\
    \end{split}
  \end{equation}

  Thus for \cref{eq: GeoPos vs CoordPos} to hold, it sufficient that
  \begin{equation}
  \label{eq: Linear}
    \sum_{\vi\backslash i_t} f_{\vi}^{(k+t)} = \frac{1}{t} \sum_{\vi\backslash i_t} \bar{f}_{\vi}, \quad t = 1, \dots, n.
  \end{equation}
   have solution in \(\bar{f}\). \cref{eq: Linear} is a linear equation in \(\bar{f}\) with \(s_1 s_2 \cdots s_n\) variables and \(n (s_1 + s_2 + \cdots + s_n)\) equations. Thus, if \(s_1 s_2 \cdots s_n \geq n (s_1 + s_2 + \cdots + s_n)\), \cref{eq: Linear} is guaranteed to have solutions and GeoConv is equivalent to CoordConv.
\end{proof}
%



\end{document}